\documentclass{article} 
\usepackage{iclr2026_conference,times}
\usepackage{graphicx}
\usepackage{subcaption}
\usepackage{enumitem}
\usepackage{booktabs}
\usepackage{comment}
\usepackage{soul}

\usepackage{amsthm}
\usepackage{amsmath,amsfonts,bm}
\usepackage{algorithm}            
\usepackage{algorithmic}

\newtheorem{thm}{Theorem}

\def\eqref#1{equation~\ref{#1}}

\def\1{\bm{1}}

\def\be{{\bf e}}
\def\diag{{\text{diag}}}

\DeclareMathAlphabet{\mathsfit}{\encodingdefault}{\sfdefault}{m}{sl}
\SetMathAlphabet{\mathsfit}{bold}{\encodingdefault}{\sfdefault}{bx}{n}

\newcommand{\E}{\mathbb{E}}
\def\ATE{\text{ATE}}
\newcommand{\Mean}{{\mathbb{E}}}

\newcommand{\Var}{\mathrm{Var}}

\usepackage{amsmath}
\theoremstyle{definition}

\newtheorem{cond}{Condition}
\usepackage{hyperref}
\usepackage{url}

\title{Designing Time Series Experiments in A/B Testing with Transformer Reinforcement Learning}

\author{
Xiangkun Wu$^{1,*}$ \\
School of Mathematical Sciences \\
Zhejiang University \\
Hangzhou, China \\
\texttt{12235031@zju.edu.cn}
\And
Qianglin Wen$^{2,*}$ \\
Yunnan Key Laboratory of Statistical Modeling and Data Analysis \\
Yunnan University \\
Kunming, China \\
\texttt{qianglin@mail.ynu.edu.cn}
\And
Yingying Zhang$^{3,*}$ \\
School of Statistics \\
East China Normal University \\
Shanghai, China \\
\texttt{yyzhang@fem.ecnu.edu.cn}
\AND
Hongtu Zhu$^{4}$ \\
University of North Carolina at Chapel Hill \\
Chapel Hill, NC, USA \\
\texttt{htzhu@email.unc.edu}
\And
Ting Li$^{5,\dagger}$ \\
School of Statistics and Data Science \\
Shanghai University of Finance and Economics \\
Shanghai, China \\
\texttt{tingli@mail.shufe.edu.cn}
\And
Chengchun Shi$^{6,\dagger}$ \\
Department of Statistics \\
London School of Economics and Political Science \\
London, UK \\
\texttt{C.Shi7@lse.ac.uk}
\\[0.5em]
$^{*}$ Equal contribution.\quad
$^{\dagger}$ Corresponding authors.
}

\iclrfinalcopy 
\begin{document}

\maketitle

\begin{abstract}
A/B testing has become a gold standard for modern technological companies to conduct policy evaluation. Yet, its application to time series experiments, where policies are sequentially assigned over time, remains challenging. Existing designs suffer from two limitations: (i) they do not fully leverage the entire history for treatment allocation; (ii) they rely on strong assumptions to approximate the objective function (e.g., the mean squared error of the estimated treatment effect) for optimizing the design. We first establish an impossibility theorem showing that failure to condition on the full history leads to suboptimal designs, due to the dynamic dependencies in time series experiments. To address both limitations simultaneously, we next propose a transformer reinforcement learning (RL) approach which leverages transformers to condition allocation on the entire history and employs RL to directly optimize the MSE without relying on restrictive assumptions. Empirical evaluations on synthetic data, a publicly available dispatch simulator, and a real-world ridesharing dataset demonstrate that our proposal consistently outperforms existing designs. 
\end{abstract}

\section{Introduction}
This paper studies A/B testing, which has been deployed by numerous technological companies including multi-sided platforms such as Airbnb, DoorDash and Meituan, major e-commerce firms such as Amazon and Alibaba, and leading companies in search and social networking such as Google, Meta and LinkedIn. The primary goal of A/B testing is policy evaluation: comparing a newly developed web design, policy or product (referred to as the \textit{treatment}) against an existing version (referred to as the \textit{control}). At its core, A/B testing applies causal inference methodologies \citep[see e.g.,][]{imbens2015causal,ding2024first,wager2024causal} to design online experiments, randomize experimental units to treatment and control groups, and infer the average treatment effect from the data collected.

We focus on applications where experimental units are exposed to sequences of treatments over time, resulting in data that form a time series \citep[see e.g.,][]{bojinov2019time,farias2022markovian,shi2023dynamic,li2023evaluating,xiong2024data}. One motivating application is A/B testing in large-scale ridesharing platforms such as Uber, Lyft, and Didi Chuxing. These companies operate as two-sided marketplaces that match passengers with drivers to offer efficient and convenient transportation services. They frequently develop and update various policies for order dispatch \citep{xu2018large,wan2021pattern,zhou2021graph}, vehicle repositioning \citep{pouls2020idle,jiao2021real}, pricing and subsidies \citep{fang2017prices,chen2019inbede}, and employ A/B testing for validation \citep{chamandy2016experimentation,shi2023multiagent}.  These policies are implemented sequentially over time. Moreover, in ridesharing, the number of passengers requesting rides (demand) and the drivers’ online time (supply) evolve dynamically over time \citep[see e.g.,][Figure 1]{luo2022policy}. Together with the sequence of policies, these variables form three interacting time series: policies influence both demand and supply, while demand and supply also interact with each other.

A/B testing in such time series experiments faces three critical challenges:
\begin{enumerate}[leftmargin=*]
    \item \textbf{Carryover effects} are ubiquitous in these experiments \citep{shi2023dynamic,xiong2024data,ni2025enhancing,wenunraveling}. That is, policies often exhibit \textit{delayed} effects: a policy implemented at each time can affect both the company's current outcome and their future outcomes, leading to the violation of the classical stable unit treatment value assumption \citep[see e.g.,][]{imbens2015causal}. In ridesharing, policies such as order dispatch, vehicle repositioning, pricing, and subsidies all have delayed effects. They alter the spatial distribution of drivers across a city, and such changes occur gradually, resulting in the delay in their impact on future outcomes. Ignoring these carryover effects and applying classical A/B testing methods by treating the time series data as independent and identically distributed often leads to insignificant average treatment effect (ATE) estimator \citep{shi2023dynamic}. 
    \item \textbf{Small treatment effects} make it extremely challenging to distinguish between new and existing policies in terms of their impact on the company's outcomes \citep{Athey2023}. In ridesharing, improvements from newly developed order dispatch policies are very modest, typically ranging from only 0.5\% to 2\% \citep{Tang2019}.
    \item \textbf{Limited duration} of the experiments further hinders  accurate estimation of the ATE. Specifically, time series experiments are often restricted to a few weeks \citep{bojinov2023design}, resulting in relatively small sample sizes for A/B testing.  
\end{enumerate}
To address the last two challenges, this paper investigates how to carefully design time series experiments to improve the accuracy of the estimated ATE, while accounting for carryover effects to address the first challenge. The design of experiments is a classical problem in statistics \citep{fisher1935design}, motivated by applications in agriculture, and has recently gained growing attention in machine learning (see Section \ref{sec:relatedworks}). Its objective is to optimally generate experimental data to maximize the information obtained for accurate estimation. In our context of A/B testing, this translates to deciding, at each time, which policy (treatment or control) to implement so that the ATE estimated from the resulting experimental data achieves the smallest possible mean squared error (MSE). 

\begin{figure}[t]
    \centering
\includegraphics[width=0.8\linewidth]{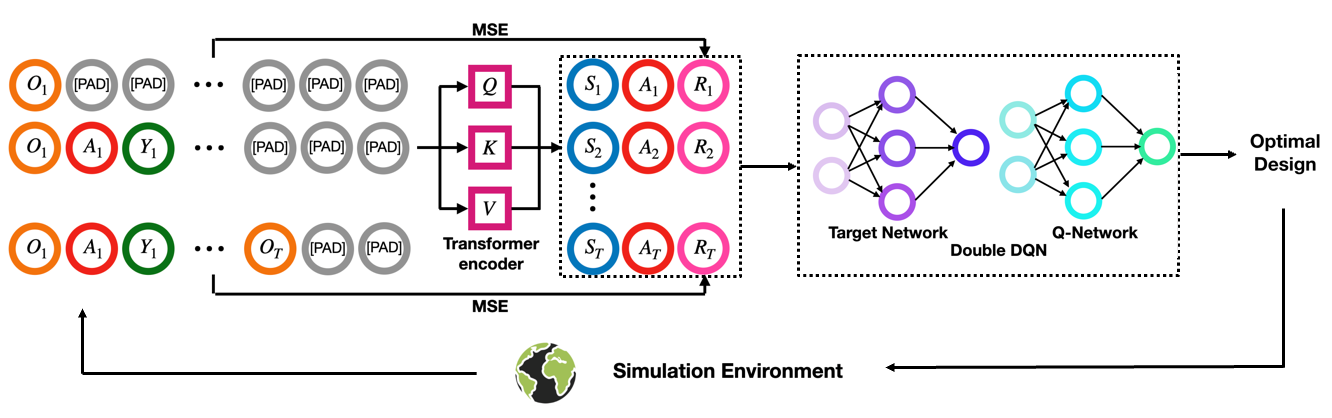}
    \caption{Illustration of the proposed transformer reinforcement learning algorithm. Our algorithm employs a transformer encoder to summarize the full historical context and to produce the state $\{S_t\}_t$. These states are then fed into a double deep Q-network agent, which outputs an optimal policy that minimizes the mean squared error of the ATE estimator (encoded as the (negative) reward $\{R_t\}_t$).
} \label{fig:rl_workflow}
\end{figure}

Our primary contribution lies in the development of such a design for A/B testing, leveraging modern neural network architectures and advanced machine learning algorithms (see Figure \ref{fig:rl_workflow} for a summary of our proposal): 
\begin{itemize}[leftmargin=*]
    \item Theoretically, we present a negative result showing that existing designs, which restrict treatment allocation to depend only on the initial action, the current observation or a short history (see Figure~\ref{fig:design_illustration} for a graphical illustration), can be suboptimal. Specifically, we establish an \textbf{impossibility theorem} to prove that when restricting to the class of doubly robust estimators \citep{tsiatis2007semiparametric} for ATE estimation, it is generally \textit{impossible} for allocation strategies that do not fully leverage the entire history to achieve optimality, at least asymptotically.
    \item Building on this insight, we employ \textbf{transformers} \citep{vaswani2017attention} to allow the determination of policies at each time to depend on the full data history, in order to capture the dynamic dependencies inherent in time series experiments.
    \item We next notice that existing works often rely on strong assumptions (see Section \ref{sec:reviewdesign} for details) to approximate the MSE as the objective function for design optimization. To address this limitation, we employ \textbf{reinforcement learning} \citep[RL,][]{sutton2018reinforcement} to directly minimize the MSE, without requiring these assumptions.
    \item Empirically, we demonstrate the usefulness of the proposed design using (i) synthetic data, (ii) a publicly available dispatch simulator that realistically mimics the behaviors of drivers and passengers, and (iii) a real dataset collected from a ridesharing company.  
\end{itemize}
In summary, our proposal can handle carryover effects and temporal dependencies, requires minimal assumptions on the data generating process and attains substantial empirical gains, all of which being achieved by harnessing modern computational power.

\begin{figure}[t]
    \centering
\includegraphics[width=12cm, height=3.5cm]{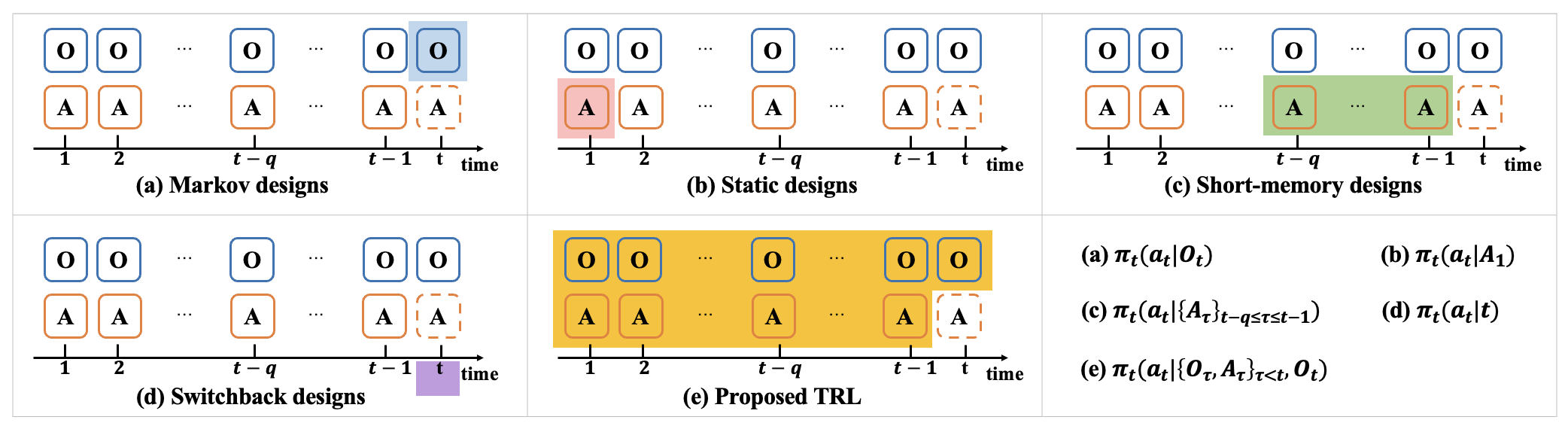}
    \caption{Graphical illustrations of treatment allocations for existing designs (a)–(d) and our proposed design (e). Specifically, existing designs condition treatment allocation on only the current observation (a), the initial action (b), a limited history (c), or the time index (d). In contrast, the proposed design conditions on the entire historical information (e).}
    \label{fig:design_illustration}
\end{figure}

\section{Related works}\label{sec:relatedworks}
Our proposal is related to three branches of research: A/B testing, experimental designs and RL for optimization. We review related works in each branch in detail below. 
\subsection{A/B testing}\label{sec:relatedAB}
The literature on A/B testing is extensive; we refer readers to \citet{larsen2023statistical} and \citet{quin2023b} for two recent reviews. Traditional A/B testing methods estimate treatment effects using the difference-in-means estimator between treatment and control groups across various business metrics such as the gross merchandise value, to determine whether the new policy outperforms the existing one. However, as noted in the introduction, this approach overlooks delayed effects, yielding biased estimates in the presence of carryover effects. 

There are four main approaches to handling carryover effects: 
\begin{enumerate}[leftmargin=*]
    \item The first applies importance sampling \citep[see e.g.,][]{precup2000eligibility, zhang2013robust, thomas2015, guo2017using, bojinov2019time, Hu2023off, zhoudemystifying}, which reweights the outcome at each time step using the ratio of the target policy to the behavior policy that generated the experimental data. However, this estimator often suffers from high variance -- a phenomenon known as the curse of horizon \citep{liu2018breaking,xie2019towards}.
    \item The second builds on the standard difference-in-means estimator but corrects its bias by discarding a subset of data after switching between treatment and control \citep{hu2022switchback}. This approach, however, reduces the effective sample size, which can be particularly problematic given the limited duration of experiments.
    \item The third adopts an RL framework by modeling the experimental data as a Markov decision process \citep[MDP,][]{puterman2014markov}, which enables the application of existing off-policy evaluation algorithms \citep[e.g.,][]{le2019batch,hao2021bootstrapping,shi2021deeply,shi2022statistical,chen2022well,uehara2022review,bian2025off} for A/B testing; see e.g., \citet{glynn2020adaptive,farias2022markovian,tang2022reinforcement,farias2023correcting,shi2023dynamic,shi2024off,wenunraveling}. This approach, however, requires the time series to satisfy the Markov assumption, which can be violated in practice \citep{sun2024optimal}. 
    \item The last relaxes the Markov assumption by using partially observable MDPs (POMDPs), particle filters or more general time series models for A/B testing \citep[see e.g.,][]{menchetti2021estimating,sun2024optimal,liang2025randomization,ni2025enhancing}. Nonetheless, these approaches still rely on specific model assumptions (e.g., linearity, parametric structure, or short-term dependence), which limits their ability to capture the complex, nonlinear, and long-range temporal dependencies that frequently arise in practice.
\end{enumerate}
Most of the aforementioned works focus on estimating the ATE given experimental data, whereas we address a different question: how to design and generate the experimental data itself. While both the third approach and ours leverage RL, their objectives differ: the third uses RL as a modeling framework for ATE estimation under carryover effects, whereas we use RL as a computational tool for MSE minimization. 

To address other challenges such as small treatment effects and limited experimental duration, many algorithms have been proposed for sample efficient A/B testing. Beyond experimental design, other approaches include developing more accurate ATE estimators \citep{chen2023efficient,chen2024experimenting,sakhi2025practical}, more powerful hypothesis tests \citep{wang2025two,zhangstrategic}, and borrowing information from historical or concurrent experimental data \citep{li2023improving,jung2024efficient,li2024combining,wu2025pessimistic}.

Finally, we note that there are two additional lines of research: one focuses on sequential monitoring, which studies how to conduct A/B testing across multiple interim stages while controlling the overall type-I error or false discoveries \citep{johari2017peeking,yang2017framework,shi2021online,maharaj2023anytime,wan2023experimentation,waudby2024anytime,lindon2025anytime}, and the other on handling interference effects beyond carryover effects, such as spillover effects across multiple experimental units \citep{munro2021treatment,li2022interference,li2022network,leung2022graph,shi2023multiagent,dai2024causal,lu2024adjusting,zhan2024estimating}. 

\subsection{Experimental designs}\label{sec:reviewdesign}
The design of experiments has been extensively studied across various disciplines, driven by a wide range of applications from biology, medicine, psychology and engineering. In the following, we structure the related literature into three categories: (i) early contributions in statistics, primarily motivated by applications in clinical trials; (ii) recent advances in machine learning, particularly those related to RL; and (iii) works in management science, operations research and other related fields, with a focus on modern A/B testing in technology industries:
\begin{enumerate}[leftmargin=*]
    \item[(i)] Early works in statistics have introduced several optimality criteria (e.g., D-optimality and A-optimality), and optimal designs like Neyman allocation as well as their generalizations, to optimize the covariate distribution \citep[see e.g.,][and the references therein]{neyman1934two,de1995d,jones2009d,yin2017optimal,loux2013simple,jones2021optimal,li2024optimal,atkinson2025optimum}. Our proposal is related to those on optimizing treatment allocation strategies in clinical trials, where a variety of designs have been developed, including covariate-adaptive design \citep{lin2015pursuit,zhou2018sequential,bertsimas2019covariate,ma2024new}, response-adaptive design \citep{hu2006theory,van2008construction,robertson2023response,wei2023fair,wei2025adaptive}, and covariate-adjusted response-adaptive design \citep{zhang2007asymptotic,hu2015unified,lin2015pursuit,zhao2022incorporating,li2024dynamic}.  
    However, most of these designs do not account for carryover effects. An exception is crossover designs \citep{hedayat2003optimal,li2015design,louis2019crossover,koner2024scientific}. But they typically require long washout periods, which reduce the effective sample size, limiting their practicality for modern A/B testing with small treatment effects and limited durations.
    \item[(ii)] More recently, the design of experiments has attracted growing attention in machine learning \citep[see, e.g.,][]{kato2024active,liu2024dual,weltz2024experimental,zhang2024online,pillai2025computing,zhubalancing}. In particular, a line of research leverages deep learning and RL to numerically compute optimal designs \citep[see, e.g.,][]{foster2021deep, blau2022optimizing, lim2022policy, annadani2024amortized, ai2025new, barlas2025performance}. Our approach shares the same underlying principle with these works but differs in two aspects. First, while these works consider generic design problems, we focus specifically on minimizing the MSE of the ATE estimator in A/B testing. Second, we propose to leverage the transformer neural network architecture to allow the treatment allocation strategy to condition on the entire history, for more effective designs under temporal dependencies (see Section \ref{sec:method} for our rationale for using transformers). In the RL literature, the design problem is also referred to as behavior policy search \citep{hanna2017data,mukherjee2022revar,liu2024efficient,liudoubly2025}. Additionally, classical D-optimal designs have been employed for more efficient policy learning in MDPs \citep[Section 3.3]{agarwal2019reinforcement}. 
    \item[(iii)] There is a growing line of works on experimental design for A/B testing in technological industries, much of which appears in management science and operations research \citep[see e.g.,][]{ugander2013graph,viviano2020experimental,bajari2021multiple,wan2022safe,bajari2023experimental,johari2022experimental,leung2022rate,jia2023clustered,ni2023design,viviano2023causal,liu2024cluster,xiong2024optimal,zhao2024experimental,zhao2024simple,missault2025robust,xiong2025automated,yu2025minimax,zhang2025multi}. Our proposal is particularly related to those that focus on time series experiments \citep{glynn2020adaptive,hu2022switchback,basse2023minimax,bojinov2023design,chen2023efficient,li2023optimal,sun2024optimal,xiong2024data,ni2025enhancing,wenunraveling}. However, these works suffer from two limitations. First, as demonstrated by our impossibility theorem (Theorem \ref{thm:impossible}), their failure to condition on the full history for treatment allocation can be suboptimal. Second, they require strong assumptions to simplify the optimization of the MSE, such as (a) assuming an MDP model \citep{glynn2020adaptive,hu2022switchback,li2023optimal,wenunraveling}, (b) imposing certain time series model assumptions \citep{xiong2024data,sun2024optimal,ni2025enhancing}, (c) assuming delayed effects persist for at most a few time periods \citep{basse2023minimax,bojinov2023design,chen2023efficient}. In contrast, our proposal does not require these assumptions.
\end{enumerate}

\subsection{RL for optimization}
RL is a modern machine learning paradigm for learning optimal policies to maximize expected cumulative rewards in sequential decision making. Owing to its effectiveness in policy optimization, RL has been adopted as a computational tool to tackle complex combinatorial optimization problems,  including the classical traveling salesman and knapsack problems \citep{bello2016neural}, the vehicle routing, orienteering, and prize-collecting traveling salesman problems \citep{kool2018attention}, the device placement problem \citep{mirhoseini2017device,mirhoseini2018hierarchical} and  dag constraints \citep{zhu2019causal}; see the survey by \citet{mazyavkina2021reinforcement} for a comprehensive review. These works parameterize the solution as an optimal policy, and set the rewards to the objective function, which enables the application of RL for optimization. Building on this principle, our proposal applies RL to the specific problem of minimizing the MSE of the ATE estimator, while integrating the transformer neural network architecture to effectively capture the temporal dependencies inherent in time series experiments.

\section{Transformer RL for Experimental Design}\label{sec:method}
\textbf{Problem setup}. Consider an online experiment conducted by a technology company, which lasts for $T$ non-overlapping time intervals. At the beginning of the $t$th time interval, the decision maker observes a set of features denoted by $O_t$. They then determine whether to implement the control policy ($-1$) or the new policy ($1$) at  $t$th time interval, represented by an action $A_t \in \{-1, 1\}$. At the end of the interval, a numerical outcome $Y_t \in \mathbb{R}$ is observed, where larger values indicate better outcomes. For example, on a ridesharing platform, $O_t$ may include market features to characterize demand and supply prior to the $t$th interval; $A_t$ may represent order dispatch, repositioning, or subsidy policies; and $Y_t$ can be the percentage of completed or accepted orders (completion or response rate) or driver income. Notice that in time series experiments, data are temporally dependent: the outcome $Y_t$ and future observation $O_{t+1}$ at each time may depend not only on the current observation-action pair, but also on past data history. We denote their conditional distribution, given the data history, by $\mathcal{P}_t$, i.e., 
\begin{eqnarray*}
\mathcal{P}_{t}(\mathcal{Y},\, \mathcal{O} \mid H_{t-1}, O_t, A_t)=\mathbb{P}(Y_t \in \mathcal{Y},\, O_{t+1} \in \mathcal{O} \mid H_{t-1}, O_t, A_t),
\end{eqnarray*}
where $\mathcal{Y}$ and $\mathcal{O}$ denote the subsets of outcome and observation spaces, respectively, and $H_{t-1}=\{O_1, A_1, Y_1,\dots, O_{t-1}, A_{t-1}, Y_{t-1}\}$ denotes the entire history up to time $t-1$.

After the experiment concludes, the decision maker collects the observation-action-outcome triplets to estimate the \textit{ATE}, defined as
\begin{equation}
\label{eq:def_ATE}
\mathrm{ATE} = \frac{1}{T}\sum_{t=1}^{T}\Big[\mathbb{E}_1(Y_t) - \mathbb{E}_{-1}(Y_t)\Big],
\end{equation}
where $\mathbb{E}_1$ and $\mathbb{E}_{-1}$ denote expectations under hypothetical scenarios in which all actions are set to the new or control policy, respectively. This estimator is then used to determine which policy (whether new or control) to deploy. As reviewed in Section \ref{sec:relatedAB}, there is a large literature on ATE \textit{estimation} given the experimental data.  

In contrast, this paper studies a complementary \textit{design} question: \textit{given an ATE estimation procedure, how shall we assign actions over time during the experiment so as to minimize the MSE of the resulting ATE estimator?} Mathematically, let $\pi_t$ denote the treatment allocation strategy used to generate $A_t$ given the past data history, we aim to determine the optimal $\pi=\{\pi_t\}_t$ that minimizes 
\begin{equation}\label{eqn:MSE}
    \textrm{MSE}(\pi)=\mathbb{E}_{\pi}{(\widehat{\mathrm{ATE}}- \mathrm{ATE})^2},
\end{equation} 
where $\widehat{\mathrm{ATE}}$ denotes the ATE estimator and $\mathbb{E}_{\pi}$ indicates that the expectation is taken over data generated under $\pi$.

\textbf{Limitations of existing designs}. With these formulations in place, we are now prepared to present our impossibility theorem. To analytically calculate the MSE, we need to specify the ATE estimator. Here, we restrict our attention to the class of doubly robust estimators, due to wide use in policy evaluation in both bandit settings  and sequential decision making with and without the Markov assumption \citep{jiang2016doubly,chernozhukov_doubledebiased_2018,kallus2022efficiently,liao2022batch}. 
Under mild regularity assumptions \citep{chernozhukov_doubledebiased_2018,kallus2022efficiently}, these estimators are asymptotically unbiased, so their MSE is equivalent to their asymptotic variance (denoted by $\textrm{Var}(\pi)$), i.e.,  
\begin{eqnarray*}
    T\textrm{MSE}(\pi)=T\textrm{Var}(\pi)+o(1), 
\end{eqnarray*}
which attains the semiparametric efficiency bound -- the smallest achievable MSE within a large class of regular and asymptotically linear estimators \citep[e.g.,][]{tsiatis2007semiparametric}. 

\begin{thm}[Impossibility theorem]\label{thm:impossible}
 Suppose we set $\widehat{ATE}$ to the double robust estimator. Then there exist data generating processes $\{\mathcal{P}_t\}_t$ under which the optimal policy $\pi$ that minimizes $\textrm{Var}(\pi)$ depends on the entire past history for all $1\le t\le T$,  and this optimal policy is unique.
\end{thm}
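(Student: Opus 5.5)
The plan is to build one explicit data generating process and compute $\textrm{Var}(\pi)$ for the doubly robust estimator in closed form. The main tool is the standard efficient-influence-function representation of the doubly robust ATE estimator in sequential settings. With propensities $\pi_t(a\mid H_{t-1},O_t)$, the estimator's asymptotic variance takes the form
\begin{equation*}
T\,\textrm{Var}(\pi)=\frac{1}{T}\sum_{t=1}^T \mathbb{E}_\pi\Big[\sum_{a\in\{-1,1\}}\frac{\sigma_{t,a}^2(H_{t-1},O_t)}{\pi_t(a\mid H_{t-1},O_t)}\Big]+C,
\end{equation*}
up to terms not depending on $\pi$ (collected in $C$). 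Here $\sigma^2_{t,a}(H_{t-1},O_t)$ is the conditional variance, given $(H_{t-1},O_t,A_t=a)$, of the "pseudo-outcome" $Y_t+V_{t+1,a}(H_t,O_{t+1})$, where $V_{t+1,a}$ is the future value under always-$a$. This is the sequential analogue of the Kallus--Uehara efficiency bound.

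The design problem then separates across $(t,H_{t-1},O_t)$. Minimizing $\sigma_{t,1}^2/p+\sigma_{t,-1}^2/(1-p)$ over $p\in(0,1)$ gives the unique Neyman-type allocation
\begin{equation*}
\pi_t^*(1\mid H_{t-1},O_t)=\frac{\sigma_{t,1}(H_{t-1},O_t)}{\sigma_{t,1}(H_{t-1},O_t)+\sigma_{t,-1}(H_{t-1},O_t)}.
\end{equation*}
Uniqueness follows from strict convexity whenever both $\sigma_{t,a}>0$, since the objective is a positive mixture of strictly convex functions. There is one subtlety: the expectation is under $\pi$ itself, so the distribution of histories depends on $\pi$. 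I will deal with this by choosing the process so that $O_t$ and the $\sigma$'s are exogenous, i.e.\ their law does not depend on past actions. Then the pointwise minimizer is the global minimizer, and it is unique almost surely.

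Next I would choose $\{\mathcal{P}_t\}_t$ so that $\sigma_{t,1}/\sigma_{t,-1}$ is a strictly monotone, injective function of the entire history $(H_{t-1},O_t)$. One candidate is $Y_t=\mu_a+\epsilon_t$ with $\textrm{Var}(\epsilon_t\mid H_{t-1},O_t,A_t=a)=\exp\{a\sum_{s\le t}\lambda^{t-s}O_s+a\sum_{s<t}\kappa_s Y_s\}$. Here the $O_s$ are i.i.d.\ continuous, and the weights are generic and nonzero so that no lower-dimensional statistic recovers the sum. With no carryover in the mean, $V_{t+1,a}$ does not depend on the future noise, so $\sigma_{t,a}^2$ equals this conditional variance. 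The optimal $\pi_t^*$ is then a strictly monotone function of a linear combination that loads on every $O_s$ and $Y_s$. It therefore depends nontrivially on every component of the history for every $t$, and no policy restricted to $O_t$, $A_1$, a fixed window, or $t$ alone can equal it.

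The main obstacle is justifying the variance formula and the separability step. The pseudo-outcome variance involves $V_{t+1,a}$, which in general depends on $\pi$-free but history-dependent quantities, and cross terms between time points must vanish. My plan is to get the martingale structure of the influence function so that cross terms are zero, and to design the example so that the $V$ terms are constant. If heteroscedastic rewards that depend on past $Y$ cause trouble because $Y_s$ depends on past actions, I will fall back on making the variance depend only on past $O_s$ together with past actions $A_s$ through a fixed known function. I would then verify directly that the $\pi$-dependence of the history law does not change the pointwise minimizer.
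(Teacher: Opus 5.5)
Your route is the paper's. You reduce $\mathrm{Var}(\pi)$ to a policy-free term plus $T^{-2}\sum_t\mathbb{E}_\pi[\sigma_{t,1}^2/\pi_t+\sigma_{t,-1}^2/(1-\pi_t)]$, take the Neyman ratio pointwise, and choose a process whose variance ratio involves the whole history. The gap is in the step you yourself flag as the subtlety.

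Pointwise minimization gives the global, unique optimum only if the lower bound $\sum_t\mathbb{E}_\pi[(\sigma_{t,1}+\sigma_{t,-1})^2]$ does not depend on $\pi$. In that case $T^2\{\mathrm{Var}(\pi)-\mathrm{Var}(\pi^*)\}=\sum_t\mathbb{E}_\pi[(\sigma_{t,1}(1-\pi_t)-\sigma_{t,-1}\pi_t)^2/\{\pi_t(1-\pi_t)\}]$, and optimality and uniqueness follow together. The paper's identity $\mathrm{Var}(\pi)-\mathrm{Var}(\pi^{\mathrm{opt}})=T^{-2}\sum_t\mathbb{E}[\Delta_t]$ relies on this same invariance without comment.

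Your candidate violates the invariance. There $(\sigma_{t,1}+\sigma_{t,-1})^2=4\cosh^2(L_t/2)$, and $L_t$ contains past $Y_s$, whose law depends on $A_s$ through both $\mu_{A_s}$ and the variance $e^{A_sL_s}$. The design problem is then a genuine dynamic program. At each $t$ one minimizes $\sigma_{t,1}^2/p+\sigma_{t,-1}^2/(1-p)+pW_1+(1-p)W_{-1}$, where $W_a$ is the optimal expected continuation cost after $A_t=a$. This linear term pushes the minimizer off the Neyman ratio whenever $W_1\neq W_{-1}$.

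Your fallback (variance depending on past actions through a known function) has the same defect, so the promised direct verification cannot succeed in general. Take $T=2$, $\sigma_{1,\pm1}=1$, and $\sigma_{2,\pm1}$ equal to $1$ if $A_1=1$ and $10$ if $A_1=-1$. The myopic Neyman rule ($p_1=p_2=1/2$) gives $\sum_t\mathbb{E}_\pi[\sigma_{t,1}^2/\pi_t+\sigma_{t,-1}^2/(1-\pi_t)]=206$, whereas $p_1=0.9$ (still $p_2=1/2$) gives about $55$.

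Second, even granting the Neyman rule, your main candidate does not give dependence on the entire history. The ratio $\sigma_{t,1}/(\sigma_{t,1}+\sigma_{t,-1})=1/(1+e^{-L_t})$ is a function of $(O_{1:t},Y_{1:t-1})$ alone, with no past actions. Moreover, the conditional law of the future given $(O_{1:t},Y_{1:t-1})$ does not involve $A_{1:t-1}$, and each stage objective is strictly convex in $p$. So the true (unique) dynamic-programming optimum ignores past actions as well. Your fallback drops the $Y_s$ instead.

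Both problems disappear if the ratio of the standard deviations, but not their sum, carries the history. For example, set $\sigma_{t,1}=r_t$ and $\sigma_{t,-1}=1-r_t$, where $r_t$ is a logistic transform of a linear combination with nonzero coefficients on every coordinate of $(O_{1:t},A_{1:t-1},Y_{1:t-1})$. Keep the $O$'s exogenous and allow no carryover in the mean. Then $(\sigma_{t,1}+\sigma_{t,-1})^2\equiv 1$ under every policy, so the identity above shows that $\pi_t^*=r_t$ is the unique minimizer, and it loads on every observation, action and outcome.

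Separately, your general variance formula with one-step propensities is not the sequential efficiency bound for arbitrary history-dependent processes, where cumulative importance ratios appear. It is valid only under no-carryover restrictions such as the paper's CMIA and CIA, which your construction does satisfy.
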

In other words, it is impossible for a treatment allocation strategy in which $\pi_t$ omits dependence on any observation, action, or outcome at any time step to be optimal.

As commented earlier, existing designs suffer from two limitations: (i) Their treatment allocation strategy $\pi$ does not rely on the full history, which can be suboptimal according to Theorem \ref{thm:impossible}. (ii) They often impose restrictive assumptions on the data generating process to simplify the calculation of the MSE. We provide several examples to elaborate:
\begin{itemize}[leftmargin=*]
    \item \ul{\textit{Markov designs}}: \citet{glynn2020adaptive} employ a finite MDP to model the experimental data. They consider Markov designs in which each $\pi_t$ depends on the past history only through the current observation $O_t$ and develop an approach to learn the optimal Markov design by solving a succinct convex optimization problem. 
    \item \ul{\textit{Static designs}}: \citet{li2023optimal} extend the classical Neyman allocation strategy to MDPs. Their optimal design is static: within each day, the same action is assigned throughout time. Consequently, $\pi_t$ depends only on the first action.
    \item \ul{\textit{Short-memory designs}}: \citet{sun2024optimal} adopt the classical ARMA($p,q$) model -- a special partially observable MDP (POMDP) -- to handle non-Markov environments. They prove that under a small signal assumption, the optimal $\pi_t$ depends only on the past $q$ actions. Meanwhile, \citet{ni2025enhancing} restrict $\pi_t$ to depend only on the most recent action. 
    \item \ul{\textit{Switchback designs}}: \citet{bojinov2023design} and \citet{wenunraveling} study switchback designs, which alternate between the control and new policy at fixed time intervals. \citet{bojinov2023design} show that the optimal switch duration depends crucially on the number of periods that carryover effects persist. \citet{wenunraveling} allow carryover effects to persist throughout the entire experiment by adopting an MDP model and show that the optimal switch duration depends on reward autocorrelations and the magnitude of carryover effects. In both cases, $\pi_t$ depends only on the time index $t$ \citep[and the initial action for][]{wenunraveling}. 
\end{itemize}
In summary, these strategies rely on limited historical information -- such as the first action, the current observation, the time index, or a short history -- and impose MDP or ARMA model assumptions, or constraints on carryover effects.

\textbf{Our proposal}. To address both limitations simultaneously, we propose a transformer reinforcement learning (TRL) algorithm for optimally designing online experiments. Our approach integrates two key components: (i) an RL framework that directly optimizes the MSE without relying on restrictive modeling assumptions, and (ii) a transformer-based neural network architecture that enables treatment allocation to condition on the entire history.

\textbf{(i) An RL framework}. We assume access to a simulation environment that can generate $(O_{t},A_{t},Y_{t})_t$ that approximates the experimental data generating process (we will discuss how to construct such an environment later). We then apply RL to interact with this environment to optimize the design. To apply RL, we define the state-action-reward triplet as follows:  
\begin{itemize}[leftmargin=*]
    \item \ul{\textit{State}}: We define the state $S_{t}$ as the full history $\{O_{1},A_{1},Y_{1},\dots,O_{t-1},A_{t-1},Y_{t-1},O_{t}\}$ up to time point $t$, in order to capture all potential temporal dependencies that might influence the optimal treatment allocation at that time;
    \item \ul{\textit{Action}}: The action is the same to $A_t$, determining which policy (standard or new) to implement at time $t$;
    \item \ul{\textit{Reward}}: The reward at each time $t$ is set to
    \begin{equation}\label{eqn:proxy_reward_def}
        R_{t}= -\alpha^{T-t}\big[\widehat{\mathrm{ATE}}(t)- \mathrm{ATE}_{mc}\big]^2,
    \end{equation}
    where $\widehat{\mathrm{ATE}}(t)$ denotes the ATE estimator computed based on all data triplets up to time $t$, $\mathrm{ATE}_{mc}$ denotes the Monte Carlo estimator learned from the simulator, and $\alpha\in [0,1)$ denotes a discount factor. 
\end{itemize}
We make a few remarks on the reward. Note that $R_t$ differs from $Y_t$; it serves as a proxy for the underlying MSE, rather than the company’s actual outcome. Specifically, before running RL, we use Monte Carlo to generate a large number of trajectories following either the standard or the new policy, and compute $\mathrm{ATE}_{\text{mc}}$, which is then treated as the ground truth to approximate the oracle ATE. During RL, at each time $t$, we compute $\widehat{\mathrm{ATE}}(t)$ using the observation-action-outcome triplets collected up to time $t$ and measure its squared deviation from $\mathrm{ATE}_{\text{mc}}$ as an approximation of the MSE. This squared error is then discounted by $\alpha^{T-t}$ to downweight earlier steps whose ATE estimators are less accurate due to smaller sample sizes. In the extreme case where $\alpha = 0$, only the last reward $R_T$ matters. Finally, the negative sign ensures that smaller MSEs yield larger rewards.

In our collaboration with the ridesharing company, physical simulators built upon historical data are available to simulate the effects of various policies. These simulators can be readily used as the simulation environment for RL. Indeed, before running online experiments, most policies are evaluated offline using such simulators to demonstrate their effects. If no simulator is available, the experiment can be conducted sequentially: after each day, we use the collected experimental data to estimate the data generating process and construct a simulator, then apply the proposed procedure to design the experiment for the next day. This process is repeated where we update both the simulator and the design as more data become available. Furthermore, we can combine both experimental data and additional synthetic data from the simulator to estimate the ATE \citep[see e.g.,][]{lin2024data,xiong2025automated}. Our procedure is fairly general, as the ATE estimator $\widehat{\mathrm{ATE}}(t)$ can be computed using any algorithm, including such data integration approaches.

\textbf{(ii) Transformer-based DDQN}. We develop a variant of double deep Q-network \citep{van2016deep} that leverages transformer architectures to learn the optimal policy. Specifically, we define the Q-function as
\begin{eqnarray*}
Q_t(S_t, A_t)= \mathbb{E}_{\pi^{\tiny{opt}}}\!\left[\sum_{k=t}^{T}\gamma^{k-t} R_{k-t} \,\middle|\, S_t, A_t \right],
\end{eqnarray*}
where $\pi^{\tiny{opt}}$ denotes the optimal policy, which is our target treatment allocation strategy and is greedy with respect to the Q-function. We propose to use transformers with masked self-attention to parameterize the Q-function. There are two advantages to this parameterization: (i) The dimension of the Q-function's input $S_t$ varies over time, and transformers can readily handle such varying-length inputs to encode the entire history; (ii) Compared to other recurrent neural networks, transformers are able to capture long-term dependencies \citep{zeng2023transformers,shi2025comparative}, ensuring that all relevant historical variables that may contribute to the optimal policy are effectively utilized.

We repeatedly sample trajectories from the simulation environment to learn the Q-function. The loss function is defined as the squared loss between the Q-function and the learning target. Optimization is performed using AdamW with cosine learning-rate scheduling, gradient clipping, and mixed-precision training for improved efficiency. A summary of our procedure is illustrated in Figure~{\ref{fig:rl_workflow}}.

\section{Experiments}
In this section, we conduct numerical experiments to evaluate the finite-sample performance of our proposal.
Comparisons is made among the following types of designs: 
\begin{itemize}[leftmargin=*]
    \item \textbf{TRL}: the proposed transformer RL design.
    \item 
    \textbf{TMDP}/\textbf{NMDP}: Designs proposed by \citet{li2023optimal} tailored for MDPs. These designs switch treatments on a daily basis and assign the same treatment within each day. 
    
    \item \textbf{Switchback designs} proposed by \citefullauthor{hu2022switchback} (\textbf{HW}), \citefullauthor{bojinov2023design} (\textbf{BSZ}), \citefullauthor{xiong2024data} (\textbf{XCT}), and \citefullauthor{wenunraveling} (\textbf{WSY}). These designs switch treatments every few time intervals. WSY uses fixed switching intervals, whereas HW, BSZ, and XCT consider regular switchback designs with random switching intervals. The difference among HW, BSZ, and XCT lies in the ATE estimators they employ. All these designs involve certain hyperparameters, for which we consider their optimal configurations.
\end{itemize}

We evaluate the aforementioned designs in three simulation environments: (1) a synthetic simulation environment; (2) a real-data-based simulation environment and (3) a publicly available dispatch simulator. Details of these environments are provided in Appendix \ref{sec:appendix_simulation}. For each environment, we compute the MSE of the ATE estimators using data generated from each design over 400 simulation replications.

\textbf{Synthetic simulator}. We consider a simulation environment where data are i.i.d. across days. Each day is divided into $M$ non-overlapping time intervals, during which a two-dimensional feature $O_t$ evolves according to a linear MDP, and the outcome $Y_t$ is generated with a linear reward function. We vary the number of days $n\in \{30, 35, 40, 45\}$ and set the number of intervals $M=4$. 
We examine Settings (i)–(iv), varying both variances and transition structures.
The transition coefficients are stationary over time and are specified in Appendix \ref{sec:appendix_simulation}. Figure \ref{fig:sim_T_4_comb_main} reports the empirical MSEs of various ATE estimators under different designs for Settings (i)-(ii), along with their confidence intervals (CIs). We make three observations: 
    \begin{enumerate}[leftmargin=*]
        \item[(a)] The proposed TRL generally outperforms all other baselines, achieving the smallest MSE and shortest CI in most settings.
        \item[(b)] Switchback designs such as XCT, HW and BSZ perform poorly in this environment, suffering from significantly larger MSEs than the rest of the designs. 
        \item[(c)] NMDP and TMDP perform comparable to TRL, with slightly larger MSEs. This is expected, as these designs are optimized for settings with i.i.d. trajectories across days, and each day indeed follows an MDP. Despite their theoretical optimality, TRL still performs better, likely due to its transformer architecture, which effectively leverages historical information and yields superior performance in finite samples.
    \end{enumerate}
    
  
Additional experimental results for Settings (iii)-(iv) are presented in Figure \ref{fig:sim_T4_coef_original} in the Appendix, showing similar patterns.

\begin{figure}[t]
    \centering
    \includegraphics[width=\linewidth,height=0.4\textheight,keepaspectratio]{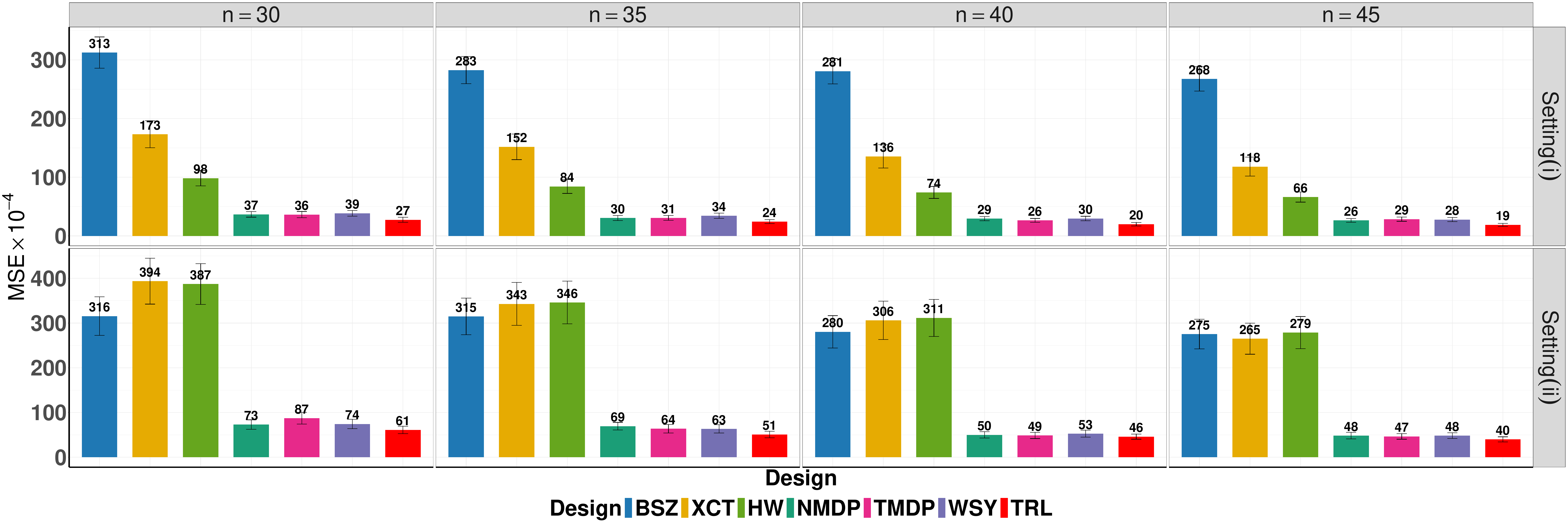}
    \caption{
        Barplots of empirical MSEs under different designs with their confidence intervals in the synthetic environment, across varying  variances (Setting (i)) and  transition structures (Setting (ii)).
    }
    \label{fig:sim_T_4_comb_main}
\end{figure}

\textbf{Real data-based simulator}. We use an A/A dataset from a world-leading ridesharing company to construct the simulator. The A/A experiment spans 40 days during which a single order dispatch policy is employed throughout. Within each day, the trajectory data are divided into $M = 12$ or $24$ non-overlapping time intervals. The observation vector is two-dimensional, consisting of (i) the number of order requests and (ii) the total online time of drivers in the previous interval, capturing the demand and supply of the two-sided marketplace. The reward is defined as the total driver income earned within each interval. However, the raw dataset cannot be directly used to evaluate different designs. To address this, we first estimate the data generating process from the A/A dataset and then construct the simulation environment using the wild bootstrap \citep{wu1986jackknife}. We consider a range of settings where the number of days $n \in \{21, 28, 35, 42\}$, the number of intervals $M \in \{12, 24\}$, and the policy improvement (measured by the ratio of the ATE to the average outcome under the control policy) is selected from $\{2.5\%, 5\%\}$, reflecting the effects typically observed for newly developed dispatch policies \citep{Tang2019}.


Figure~\ref{fig:real_data_sim_T_12_24} reports the empirical MSEs for various designs, together with their confidence intervals, under a policy improvement of $5\%$. As before, the proposed TRL design consistently outperforms all other designs, particularly when $n$ is small. Interestingly, TMDP and NMDP yield significantly larger MSEs in this environment. This is due to the positive correlation observed in outcomes in the A/A dataset: in such settings, designs that switch treatments on a daily basis such as TMDP and NMDP are no longer optimal \citep{xiong2024data,wenunraveling}. Figure~\ref{fig:real_data_sim_T_12_2} in the Appendix reports results under a $2.5\%$ policy improvement, showing similar patterns.

\begin{figure}[t]
    \centering
    \includegraphics[width=\linewidth,height=0.4\textheight,keepaspectratio]{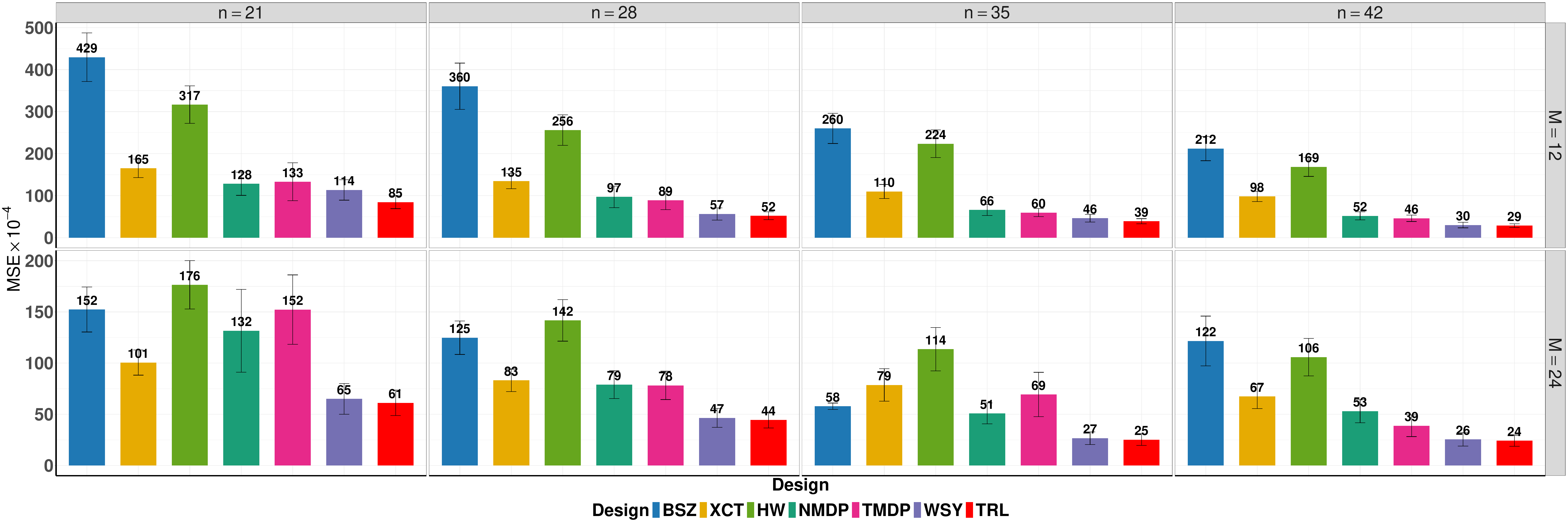}
    \caption{
        Barplots of the empirical MSEs under different designs in the real-data-based simulation, with a $5\%$ performance improvement from the new policy.
    }
    \label{fig:real_data_sim_T_12_24}
\end{figure}





\begin{figure}[t]
    \centering
    \includegraphics[width=0.8\linewidth,height=0.4\textheight,keepaspectratio]{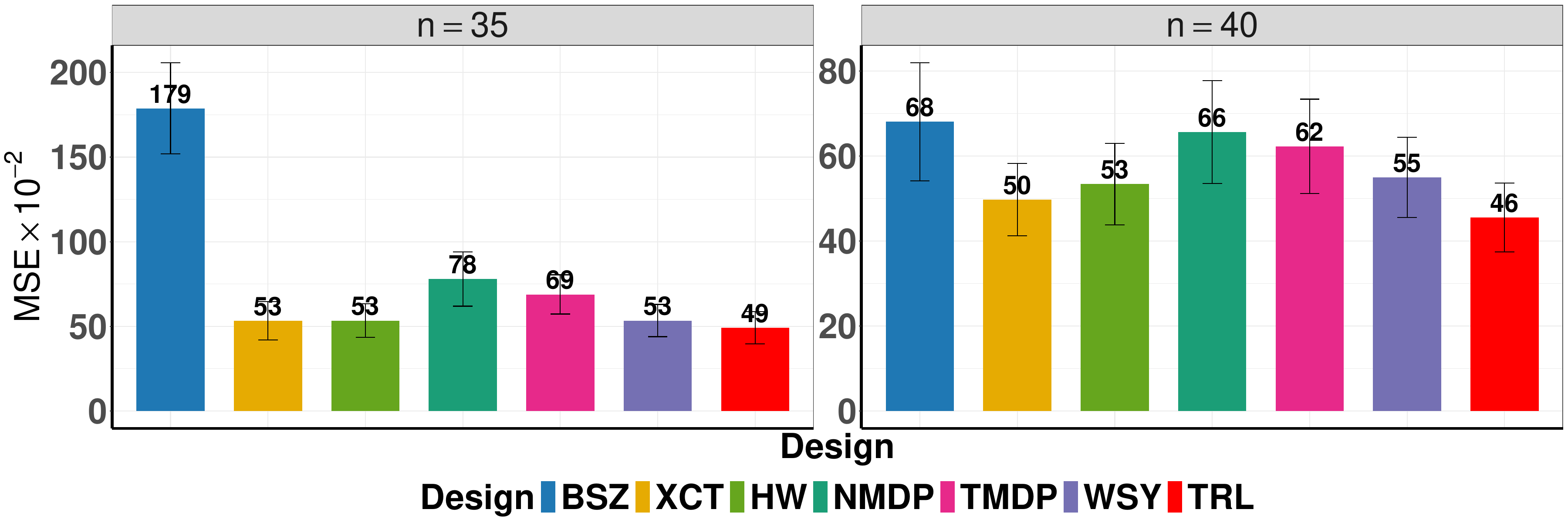}
    \caption{
        Barplots of empirical MSEs under different designs with their confidence intervals in the dispatch environment, across different days.
    }
    \label{fig:Grid_T20_res}
\end{figure}

\textbf{Public dispatch simulator}. Following \citet{xu2018large} and \citet{li2023optimal}, we employ a publicly available physical dispatch simulator\footnote{https://github.com/callmespring/MDPOD.} that generates realistic patterns of passenger and driver behavior in a ridesharing marketplace to evaluate different designs. In this simulator, drivers and orders operate on a $9 \times 9$ spatial grid over 20 time steps per day, and orders are canceled if they remain unserved for an extended period. We compare the MDP-based order dispatch policy of \citet{xu2018large} with a distance-based method that minimizes the total driver–passenger distance during dispatch. The outcome of interest is the total revenue at each time step, and the observation variables again include both supply and demand. Simulations are conducted over $n \in \{35, 40\}$ days, with each design evaluated using 100 orders and a fixed fleet of 25 drivers.

Figure \ref{fig:Grid_T20_res} reports the empirical MSEs, demonstrating that the proposed TRL consistently outperforms all baselines. As 
$n$ increases, TRL achieves lower mean MSEs. While switchback designs mitigate spatial and temporal carryover effects by alternating treatments and thereby reducing bias—leading to smaller MSEs than NMDP and TMDP—the proposed TRL attains the lowest MSEs.



\section{Conclusion}
This paper studies the design of time series experiments in A/B testing. We propose a novel transformer RL approach to enable treatment allocation to depend on the entire data history and relax restrictive modeling assumptions required by existing designs. Future work could extend our framework beyond time series experiments to more general experimental settings involving multiple experimental units, where spillover effects between units may arise.

\section*{Reproducibility Statement}
We have taken extensive steps to ensure the reproducibility of our results.
For experiments based on publicly available datasets, we will release all source code, including data preprocessing, model training, and evaluation scripts, in the supplementary materials.

Due to data use agreements, certain real-world datasets cannot be shared publicly. Nevertheless, we provide a detailed description of their characteristics in Appendix~\ref{sec:appendix_simulation} together with comprehensive instructions for the associated simulators, and a visual empirical analysis of real-world datasets in Appendix~\ref{sec:appendix_empirical_data}.

To mitigate randomness and allow for an objective evaluation, we performed 400 independent runs and reported the mean squared error (MSE) together with its confidence interval (CI).

Experiments were conducted on NVIDIA A100 (40GB) and RTX 2080 GPUs, with the fastest synthetic experiment training within about three GPU hours. In addition, we provide scripts to reproduce all figures in the main paper for experiments conducted on public or synthetic data.

\section*{Ethics Statement}

This work does not raise any known ethical concerns. 
The datasets used are either publicly available or synthetically generated, 
and do not contain personally identifiable information. 
Our methods are intended for advancing research in experimental design 
and causal inference, and we do not foresee direct misuse. 
Nevertheless, as with any machine learning research, 
there is a potential risk of unintended applications. 
We encourage future users of our code and methods to apply them responsibly 
and in accordance with ethical guidelines. 

\section*{Acknowledgment}
We thank the anonymous referees and the meta-reviewer for their constructive comments, which have significantly improved this manuscript.
Xiangkun Wu’s research was supported by the National Key Research and Development Program of China (Grant No. 2024YFC2511003).
Qianglin Wen’s research was supported by the National Key R\&D Program of China (Grant No. 2022YFA1003701).
Yingying Zhang’s research was supported by the National Natural Science Foundation of China (Grant No. 12471280).
Ting Li’s research was supported by the National Natural Science Foundation of China (Grant No. 12571304), the Shanghai Pujiang Program (Grant No. 24PJIC030), the CCF–DiDi GAIA Collaborative Research Funds, and the Program for Innovative Research Team of Shanghai University of Finance and Economics.

\bibliography{reference}
\bibliographystyle{iclr2026_conference}

\section*{Large Language Model (LLM) Use Declaration}

We used \textbf{GPT-5 Pro} as an auxiliary tool in the preparation of this paper. Its usage was limited to:
\begin{itemize}
    \item \textbf{Writing support}: grammar correction, sentence restructuring, and improving readability of the manuscript (approximately 5--8\% of the final text).
    \item \textbf{Code optimization}: suggestions for improving Python simulation scripts (e.g., tensor operations, parallelization on GPUs, and visualization enhancements, approximately 10--15\% of the whole codes).
    \item \textbf{Technical clarity}: checking alternative phrasings of definitions and equations for better presentation.
\end{itemize}

LLMs were \textbf{not} used for:
\begin{itemize}
    \item generating novel research ideas,
    \item creating mathematical proofs,
    \item designing experiments, or
    \item interpreting results.
\end{itemize}

All LLM-generated suggestions were carefully reviewed, verified, and substantially revised by the authors. The final responsibility for all text, code, and conclusions lies solely with the authors.

\appendix
\section{Appendix}

\subsection{Proof of Theorem \ref{thm:impossible}}
\begin{proof}
Let $H_{t-1}$ denote the set of observation-action-outcome triplets up to time $t-1$. Consider the data generating distributions $\{\mathcal{P}_t\}_t$ that satisfy the following three conditions: \begin{cond}[Conditional mean independence assumption (CMIA)]
    The conditional mean of $Y_t$ given $O_t$, $A_t$ is independent of $H_{t-1}$, for any $t$. 
\end{cond}
\begin{cond}[Conditional independence assumption (CIA)]
    The conditional distribution of $O_{t+1}$ given $H_t$ depends only on $\{O_j\}_{j\le t}$, for any $t$. 
\end{cond}
\begin{cond}[History-dependent conditional variance ratio (HCVR)]
    Let $\sigma_t^2(H_t,O_t,A_t)$ denote the conditional variance of $Y_t$ given $O_t$, $A_t$ and $H_{t}$. Then, for each $t$, $\sigma_t(H_t,O_t,A_t)$ is positive almost surely, and the ratio 
    \begin{eqnarray*}
        \frac{\sigma_t(H_t,O_t,1)}{\sigma_t(H_t,O_t,-1)}
    \end{eqnarray*}
    depends on all variables in $H_t$ and $O_t$. That is, no subset of these variables is sufficient to fully recover this ratio. 
\end{cond}
It is straightforward to construct data generating processes $\{\mathcal{P}_t\}_t$ to satisfy these assumptions.

Following  \citet{tsiatis2007semiparametric,kallus2020double}, the double robust estimator under CMIA and CIA can be shown to take the following form: 
\begin{equation*}
	\begin{split}
	\widehat{\ATE} = \frac{1}{T}\sum_{t=1}^T \left[ 
		{\mu}_t(O_{t},1) - 	{\mu}_t(O_{t}, -1) + \mathbb{I}(A_{t}=1) \frac{ Y_{t}- 	{\mu}_t(O_{t}, 1) }{ \pi_t(1| H_{t}, O_t)} -\mathbb{I}(A_{t}=-1) \frac{Y_{t} -	{\mu}_t(O_{t}, -1)  }{\pi_t(-1| H_{t},O_t) }
		\right] , 
	\end{split}
\end{equation*}
where $\mu_t(o,a)=\mathbb{E}(Y_t|O_t=o,A_t=a)$. It follows from the CMIA that the residuals $Y_{t}- 	{\mu}_t(O_{t}, A_t)$ are uncorrelated so that
\begin{equation}\label{eqn:varATE}
	\begin{split}
		 \Var(	\widehat{\ATE})&=\frac{1}{T^2} \sum_{t_1=1}^T \sum_{t_2=1}^T \left[ \text{Cov}\left( {\mu}_{t_1} (O_{t_1}, 1) - 	{\mu}_{t_1} (-1, O_{t_1} ) , {\mu}_{t_2} (1, O_{t_2}) - 	{\mu}_{t_2} (O_{t_2}, -1 )    \right)
         \right] \\ &+\frac{1}{T^2}\sum_{t=1}^T \left[ \Mean\left(   \mathbb{I}(A_{t}=1)  \frac{ Y_{t}- 	{\mu}_t(O_{t}, 1) }{ \pi_t(1| H_{t},O_t)} \right)^2 +  \Mean\left( \mathbb{I}(A_{t}=-1) \frac{Y_{t} -	{\mu}_t(O_{t}, -1)  }{ \pi_t(-1| H_{t},O_t) } \right)^2  
		 \right] \\
		 &= \frac{1}{T^2} \sum_{t_1=1}^T \sum_{t_2=1}^T \left[ \text{Cov}\left( {\mu}_{t_1} (O_{t_1}, 1) - 	{\mu}_{t_1} (O_{t_1}, -1) , {\mu}_{t_2} (O_{t_2}, 1) - 	{\mu}_{t_2} (O_{t_2}, -1 )    \right)
         \right] \\
         &+\frac{1}{T^2} \sum_{t_1=1}^T
         \left[ \Mean[ \frac{\sigma^2(H_t, O_t, 1)}{ \pi_t(1| H_{t},O_t) }]  +\Mean[ \frac{\sigma^2(H_t, O_t, -1)}{  \pi_t(-1| H_{t},O_t) }] 
		  \right] .
	\end{split}
\end{equation}
By minimizing the conditional variance of the above ATE estimator with respect to the assign prob-
ability, we can obtain the optimal allocation $\pi_t^{\tiny{opt}}$ at each time $t$, given by
	\begin{equation*}
		\pi_t^{\tiny{opt}}(1|H_t,O_t)=\frac{\sigma_t(H_t, O_t, 1)}{\sigma_t(H_t, O_t, 1) + \sigma_t(H_t, O_t, -1)}.
	\end{equation*}
When the right-hand-side depends on both $O_t$ and $H_t$, so does the optimal allocation strategy. Under HCVR, the optimal policy $\pi_t^{\tiny{opt}}$ depends on all variables in $O_t$ and $H_t$, for any $t$.   

To complete the proof, we show the uniqueness of $\{\pi_t^{\tiny{opt}}\}_t$. Given its uniqueness, it is impossible for treatment allocation strategy in which $\pi_t$ omits dependence on any historical variable at any time step to be optimal. The proof is thus completed. 

For any policy $\pi$, it follows from \eqref{eqn:varATE} that
\begin{equation}
    \Var(\pi)-\Var(\pi^{\tiny{opt}})
    = \frac{1}{T^2}\sum_{t=1}^T
      \E\!\left[
        \phi_t\big(\pi_t;H_t,O_t\big)
      \right],
    \label{eq:var-functional}
\end{equation}
where 
\[
    \phi_t(\pi;h,o)
    := \frac{\sigma_t^2(h,o,1)}{\pi}
     + \frac{\sigma_t^2(h,o,-1)}{1-\pi},
    \qquad 0<\pi<1.
\]

With some calculations, we have
\begin{eqnarray*}
    \phi_t\big(\pi_t^{\mathrm{opt}};h,o\big)= 
    \{\sigma_t(h,o,1)+\sigma_t(h,o,-1)\}^2.
\end{eqnarray*}
It follows that for any $0<\pi<1$,
\begin{equation}
    \begin{split}
        \Delta_t(\pi;h,o)
        &:= \phi_t(\pi;h,o)-\phi_t(\pi_t^{\mathrm{opt}};h,o)\\
        &= \frac{\big(\sigma_t(h,o,1)(1-\pi) - \sigma_t(h,o,-1)\,\pi\big)^2}{\pi(1-\pi)}\ge 0.
    \end{split}
    \label{eq:delta-square}
\end{equation}
Given the positiveness of $\sigma_t$ in Condition 3, the equality holds if and only if $\pi=\pi_t^{\tiny{opt}}$. 

By \eqref{eq:var-functional} and \eqref{eq:delta-square}, for any allocation rule $\pi$ we obtain
\[
    \Var\{\widehat{\mathrm{ATE}}(\pi)\}
    - \Var\{\widehat{\mathrm{ATE}}(\pi^{\mathrm{opt}})\}
    = \frac{1}{T^2}\sum_{t=1}^T
      \E\!\left[\Delta_t\big(\pi_t(1\mid H_t,O_t);H_t,O_t\big)\right]
    \;\ge\; 0,
\]
where the inequality holds if and only if $\pi_t=\pi_t^{\tiny{opt}}$, for any $t$. This completes the proof for the uniqueness of the optimal policy.

\end{proof}

\subsection{Details of Simulation Experiment.}\label{sec:appendix_simulation}
In this section, we present the detailed state transition specifications of the simulation environments and report additional experimental results.

{\bf Synthetic simulator (Continued).}
At the beginning of each day $i \in \{1,\dots,n\}$, the initial state $O_{i,1} \in \mathbb{R}^2$ is drawn with from the standard normal distribution $O_{i,1} \sim \mathcal{N}(0, I_2)$. Moreover, regardless of the experimental design method employed, the initial states $\{O_{i,1}\}_{i=1}^n$ are mutually independent across different days.
For each day $i$ and each time step $m$, the outcome $Y_{i,m}$ and the next state $O_{i,m}$ evolve according to the following linear transition equations with Gaussian noise:
\begin{align}
Y_{i,m} &= \alpha_m + \beta_m^\top O_{i,m} + \gamma_m A_{i,m} + \varepsilon^Y_{i,m}, 
& \varepsilon^Y_{i,m} \sim \mathcal{N}(0, \sigma_y^2), 
\quad m = 1,\dots,M,
\label{eq:Ym_model}
\\[6pt]
O_{i,m} &= \phi_m + \Phi_m O_{i,m-1} + \Gamma_m A_{i,m-1} + \varepsilon^O_{i,m}, 
& \varepsilon^O_{i,m} \sim \mathcal{N}(0, \sigma_o^2 I_2), 
\quad m = 2,\dots,M.
\label{eq:Om_model}
\end{align}
where, $\alpha_m \in \mathbb{R}$ denotes the intercept term, $\beta_m \in \mathbb{R}^2$ is the coefficient vector for the state $O_{i,m}$ in the outcome equation, and $\Phi_m \in \mathbb{R}^{2 \times 2}$ is the state transition matrix. Together with $\gamma_m \in \mathbb{R}$, $\phi_{m} \in \mathbb{R}^2$, and $\Gamma_m \in \mathbb{R}^2$, these parameters characterize the direct and indirect effects of the state and the action on both the outcome and the state transitions. The noise terms $\{\varepsilon^Y_{i,m}, \varepsilon^O_{i,m}\}$ are assumed to be mutually independent across both time $m$ and individuals $i$.

Following \citet{luo2022policy}, the true ATE can be defined as in (\ref{ate_est_formula}). The generated panel data $\{O_{i,m}, A_{i,m}, Y_{i,m}: 1 \leq i \leq n,\ 1 \leq m \leq M \}$ can be flattened into a single time-series trajectory $\{O_t, A_t, Y_t\}{t=1}^T$, where $t = (i-1)M + m$. Together with our definitions of state and reward, this yields $\{S_t, A_t, R_t\}{t=1}^{T}$. A similar procedure is presented in the next subsection. We consider $n \in \{30, 35, 40, 45\}$ days and $M =4$. We consider three settings.

Setting (i).
\[
\alpha_m = 0, \qquad 
\beta_m = (0.6,0.2)^\top, \qquad
\gamma_m = 0.2 ,\qquad \text{~for~all~} m = 1, \dots, M,
\]
and
\[
\phi_{m} = \begin{bmatrix} 0 \\ 0 \end{bmatrix}, \qquad
\Phi_m = 
\begin{bmatrix}
0.5 & 0.1 \\
0.0 & 0.6
\end{bmatrix}, \qquad
\Gamma_m = \begin{bmatrix} 0.1 \\ 0.05 \end{bmatrix}, \qquad \text{~for~all~} m = 2, \dots, M.
\]
The standard deviations are set to be  $\sigma_y = \sigma_o = 0.2$.
(ii). The settings are similar to (i) except that 
$$
\Phi_m =
\begin{bmatrix}
0.6 & 0.2 \\
0.5 & 0.6
\end{bmatrix},
$$ 
to examine how our methods perform under varying dynamics, and $\sigma_y = \sigma_o = 0.3$.
(iii) The settings are similar to (i) except that $\sigma_y = \sigma_o = 0.3$.
(iv)  The settings are similar to those in (iii) except that $\beta_m = ( 0.3, 0.1 )^\top$. 


Figure~\ref{fig:sim_T4_coef_original} reports the MSEs under Settings (iii)-(iv). The results indicate that the proposed \textbf{TRL} generally achieves lower mean MSEs than the other methods.

\begin{algorithm}[H]
\caption{Bootstrap-based Simulator}
\label{algo:res_bootstrap}
\begin{algorithmic}[1]
  \STATE \textbf{Input:} Real data $\{(O_{i,m}, Y_{i,m}): 1 \le i \le n,\ 1 \le m \le M\}$; adjustment parameters $(\delta_1,\delta_2)$; Monte Carlo truth $\mathrm{ATE}_{\text{mc}}$; bootstrap sample size $n$; random seed; number of bootstrap replications $B$.
  \STATE \textbf{Output:} Synthetic time-series trajectories $\{(S_t^b, A_t^b, R_t^b)_{t=1}^T\}_{b=1}^B$ with $B$ replications.
  \STATE Compute least-squares estimates $\{\widehat{\alpha}_m\}, \{\widehat{\beta}_m\}, \{\widehat{\phi}_m\}, \{\widehat{\Phi}_m\}$ in model~\eqref{linear_mdp}, treatment-effect parameters $\{\widehat{\gamma}_m\}, \{\widehat{\Gamma}_m\}$, and residuals from~\eqref{eqn:residuals}.
  \FOR{$b=1$ \TO $B$}
    \STATE Sample $n$ days from $\{1,\ldots,N\}$ with replacement and draw $\xi_i^b \sim \mathcal{N}(0,1)$.
    \STATE Generate state $S_t^b$ and action $A_t^b$ at time $t$ using the transformer-based DDQN with an epsilon-greedy policy, where $t=(i-1)M+m$.
    \STATE Generate pseudo outcomes $\{\widehat{Y}_{i,m}^b\}_{i,m}$ and observable states $\{\widehat{O}_{i,m}^b\}_{i,m}$ via:
    \[
      \begin{aligned}
        \widehat{Y}_{i,m}^b &= [1,\ (\widehat{O}_{i,m}^{b})^\top,\ A_{i,m}^b]
          \begin{pmatrix}
            \widehat{\alpha}_m\\[2pt]
            \widehat{\beta}_m\\[2pt]
            \widehat{\gamma}_m
          \end{pmatrix}
          + \xi_{i}^b\, \widehat{e}_{i,m}, \\
        \widehat{O}_{i,m+1}^b &= [\widehat{\phi}_m,\ \widehat{\Phi}_m,\ \widehat{\Gamma}_m]
          \begin{pmatrix}
            1\\[2pt]
            \widehat{O}_{i,m}^{b}\\[2pt]
            A_{i,m}^b
          \end{pmatrix}
          + \xi_{i}^b\, \widehat{E}_{i,m}.
      \end{aligned}
    \]
    \STATE Compute $\{\mathrm{ATE}_{\mathrm{SB}}^{b}\}_{b}$ by OLS, and calculate the reward $R_t^b$ using~\eqref{eqn:cal_prox_reward_ols}.
  \ENDFOR
\end{algorithmic}
\end{algorithm}

{\bf Real data-based simulator (Continued)}.
We use a dataset spanning from May 17, 2019, to June 25, 2019, with one-hour intervals, resulting in $M=24$ time units per day. In line with the bootstrap-based simulation procedure of \citet[Algorithm~5]{wenunraveling}, we construct a simulator to generate synthetic experimental data. In particular, we apply linear models to both the reward function and the expected value of the next state, resulting in the following set of linearity assumptions:
\begin{equation}\label{linear_mdp}
	\left\{ 
	\begin{split}
		\mu_t(O_m,A_m)&=\alpha_{m} + O_m^\top \beta_m + \gamma_m A_m, \\
		\Mean (O_{m+1}|A_m,O_m)&=\phi_m + \Phi_m O_m + \Gamma_m A_m,
	\end{split}
	\right. 
\end{equation}
where $\alpha_m$ and $\gamma_m$ are real-valued, $\beta_m,\phi_m,$ and $\Gamma_m$ are vectors in  $\mathbb{R}^d$, and $\Phi_m \in \mathbb{R}^{d\times d}$. Under Model \eqref{linear_mdp}, as outlined in \citet{luo2022policy}, the ATE can be expressed as
\begin{eqnarray}\label{ate_est_formula}
\frac{2}{M}\sum_{m=1}^M \gamma_m+\frac{2}{M}\sum_{m=2}^M \beta_m^\top \Big[ \sum_{k=1}^{m-1} (\Phi_{m-1}\Phi_{m-2}\ldots\Phi_{k+1}) \Gamma_k \Big],
\end{eqnarray}
where the product $\Phi_{m-1}\ldots \Phi_{k+1}$ is treated as an identity matrix if $m-1<k+1$. The first term on the right-hand side (RHS) of \eqref{ate_est_formula} represents the direct effect of actions on immediate rewards, while the latter term accounts for the delayed or carryover effects of previous actions.  The \eqref{ate_est_formula} motivates the use of ordinary least squares (OLS) regression to obtain the relevant estimators, which are then substituted into \eqref{ate_est_formula} to compute the final ATE estimator. This yields the estimators $\{\widehat{\alpha}_m\}_m$, $\{\widehat{\beta}_m\}_m$, $\{\widehat{\phi}_m\}_m$ and $\{\widehat{\Phi}_m\}_m$. However, $\{\gamma_t\}_m$ and $\{\Gamma_m\}_t$ remain unidentifiable, since $A_m=0$ almost surely. We then calculate the residuals in the reward and observable state regression models based on these estimators as follows:  
\begin{equation}\label{eqn:residuals}
	\widehat{e}_{i,m}=Y_{i,m}-\widehat{\alpha}_m-O_{i,m}^\top \widehat{\beta}_m, \quad \widehat{E}_{i,m}=O_{i,m+1}-\widehat{\phi}_{m}- \widehat{\Phi}_t O_{i,m}.
\end{equation}
To generate simulation data with varying sizes of treatment effect, we introduce the treatment effect ratio parameter $\lambda$ and manually set the treatment effect parameters {$\widehat{\gamma}_m=\delta_1\times (\sum_i Y_{i,m}/N)$ and $\widehat{\Gamma}_m=\delta_2\times (\sum_i O_{i,m}/ N)$}. The treatment effect ratio essentially corresponds to the ratio of the ATE and the baseline policy's average return. The treatment effect ratio is chosen from $\{2.5\%, 5\%\}$.

Finally, to create a synthetic dataset spanning $n$ days, the synthetic dataset is flattened into a single time-series trajectory indexed by 
$t = 1,\dots,T$, with $T = nM$ and $t = (i-1)M + m$ for $i \in [n]$ and $m \in [M]$. State is established by full history up to time point $t$, and action is generated according to transformer-based DDQN with an epsilon-greedy policy. We then sample i.i.d. standard Gaussian noises $\{\xi_i\}_{i=1}^n$. For the $i$-th day, we uniformly sample an integer $I\in \{1,\dots, N\}$, set the initial observation to $O_{I,1}$, and generate rewards and states according to \eqref{linear_mdp} with the estimated $\{\widehat{\alpha}_m\}_m$, $\{\widehat{\beta}_m\}_m$, $\{\widehat{\phi}_m\}_m$, $\{\widehat{\Phi}_m\}_m$, the specified $\{\widehat{\gamma}_m\}_m$ and $\{\widehat{\Gamma}_m\}_m$, and the error residuals given by $\{\xi_i \widehat{e}_{i,m}:1\le m \le M\}$ and $\{\xi_i \widehat{E}_{i,m}:1\le m\le M\}$, respectively. This ensures that the error covariance structure of the synthetic data closely resembles that of the real datasets. Based on the simulated data, we estimate the ATE at the end of each day using OLS. 
According to the definition in~\eqref{eqn:proxy_reward_def}, the non-zero proxy reward is then calculated as
\begin{equation}\label{eqn:cal_prox_reward_ols}
    R_t = \begin{cases}
        -\alpha^{\,n-i} \bigl[\widehat{\mathrm{ATE}}(t) - \mathrm{ATE}_{\text{mc}}\bigr]^2 , & \text{~~if~} t=iM,\\
        0 & \text{~~otherwise,~}
    \end{cases}
\end{equation}
where the Monte Carlo truth $\mathrm{ATE}_{\text{mc}}$ can be pre-estimated by generating large datasets 
under the control policy and the new policy, respectively. Up to the time horizon $T$, we collect the full trajectory $\{S_t, A_t, R_t\}_{t=1}^T$, which is then used to train our transformer-based DDQN.  A summary of the bootstrap-assisted procedure is provided in Algorithm \ref{algo:res_bootstrap}.

Figure \ref{fig:real_data_sim_T_12_2} reports the MSEs for a true treatment effect ratio of 2.5\%. The proposed TRL attains the lowest mean MSEs across all baselines, with mean MSEs decreasing as $n$ increases.

{\bf Public dispatch simulator (Continued).}
In this example, we provide a more detailed description of the environment ,similar to \citet{xu2018large}. We examine the interactions between drivers and orders in a $9 \times  9$  spatial grid with a duration of 20 time steps. Drivers are constrained to move vertically or horizontally by only one grid at each time step, while orders can only be dispatched to drivers within a Manhattan distance of 2. An order will be canceled if not being assigned to any driver for a long time. The cancellation time follows a truncated Gaussian distribution with a mean of $2.5$,
and a standard deviation of $0.5$, ranging from 0 to 3 on the temporal axis. To generate realistic traffic patterns that mimic a morning peak and a night peak, we model residential and working areas separately, and orders’ starting locations are sampled using a two-component Gaussian mixture distribution. The locations are then truncated to integers within the spatiotemporal grid. Orders’ destinations and drivers’ initial locations are randomly sampled from a discrete uniform distribution on the grid. The parameters of the mixture of Gaussians are as follows. 
\begin{equation*}
    \pi^{(1)}= 1/3,    \pi^{(2)}= 2/3, \mu^{(1)}=[3,3,2], \mu^{(2)}=[6,6,14], \sigma^{(1)}=[2,2,2],  \sigma^{(2)}=[2,2,2], 
\end{equation*}
The three dimensions correspond to the spatial horizontal and vertical coordinates, and the temporal coordinate respectively.\\
Here, we treat the summary statistics of the system, namely the number of active orders and the number of active drivers at each time step, as a two-dimensional state variable $O$. By the number of active orders we mean the number of orders that satisfy feasibility constraints (e.g., not yet expired), and by the number of active drivers we mean the number of drivers who are idle and thus available to accept new orders. The action $A$ (dispatch decision) takes two values, $1$ and $-1$, where $A=1$ corresponds to the MDP-based policy and $A=-1$ corresponds to the distance-based policy. Specifically, $1$ denotes the \emph{MDP-based action}, while $-1$ denotes the \emph{distance-based action}. 
The MDP-based action corresponds to a dispatching policy that selects orders so as to maximize the sum of the drivers' long-term expected revenue, taking into account the future benefits of current matches. 
In contrast, the distance-based action corresponds to choosing the assignment that minimizes the total pickup distance of all matches at time $m$. The outcome $Y_m$ is defined as the total revenue of all matched orders at time $m$.
We first generate the information for each driver and each order, and then apply the Sinkhorn algorithm to obtain the optimal matching that maximizes the overall assignment objective. Based on these generated large-scale data, we learn the long-term value function $V$ associated with the action of each driver. Subsequently, we construct the MDP-based dispatching policy for order allocation using the learned value function $V$.Based on the learned MDP-based policy and the distance-based policy, we generate a large number of samples. Using these samples, we fit the state transition dynamics with a neural network model.
\begin{align}
\hat{\varepsilon}^Y_{i,m} &= Y_{i,m} - \hat{Y}_{i,m}, 
\quad \hat{Y}_{i,m} = \hat{f}_t(O_{i,m}, A_{i,m}), 
\quad m=1,\dots,M , \\[6pt]
\hat{\varepsilon}^O_{i,m} &= O_{i,m} - \hat{O}_{i,m}, 
\quad \hat{O}_{i,m} = \hat{g}_t(O_{i,m-1}, A_{i,m-1}), 
\quad m=2,\dots,M .
\end{align}
where $\hat{f}_m$ and $\hat{g}_m$ denote neural network predictors trained by minimizing the squared error loss.
We compute the residuals at each time step across all samples and estimate their mean and variance. 
We then assume the residuals follow Gaussian distributions with the corresponding means and variances. 
Based on this, we construct a simulator in which the predicted values from the neural networks 
$\hat{f}_m$ and $\hat{g}_m$ are perturbed by Gaussian noise, 
with the outputs rounded and truncated to better mimic the real-world setting 
(where the number of active drivers and active orders are integers). 
Using this simulator, we generate $20{,}000$ Monte Carlo samples 
under the all$+1$ and all$-1$ action sequences to obtain the ground truth ATE. 
For the experimental design methods, our TRL estimator employs the Least-Squares Temporal Difference (LSTD) approach 
to estimate the ATE. 
This differs from the first two experiments, since the present environment is not linear, 
and thus the more general LSTD method is required.

    \begin{figure}[H]
    \centering
\includegraphics[width=\linewidth]{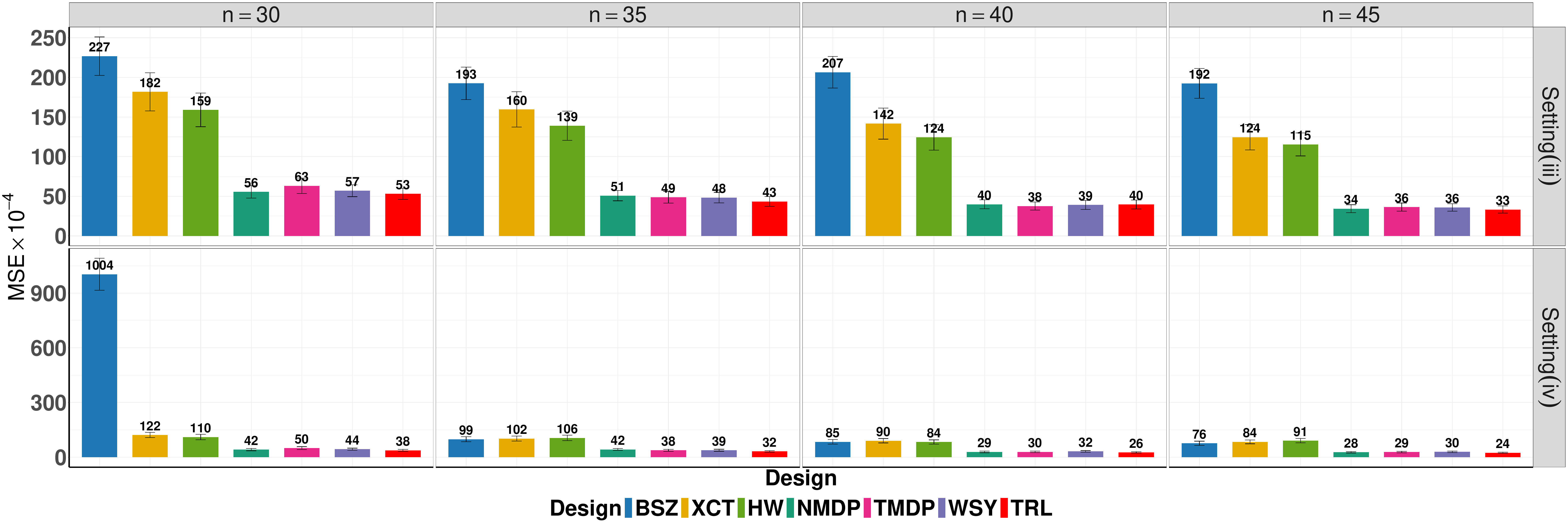}
    \caption{
        Barplots of empirical means of MSEs with $M=4$ in Settings (iii) and (iv) of the \textbf{Synthetic simulator}.
    }
    \label{fig:sim_T4_coef_original}
\end{figure}

\begin{figure}[H]
    \centering
    \includegraphics[width=0.9\linewidth,height=0.6\textheight,keepaspectratio]{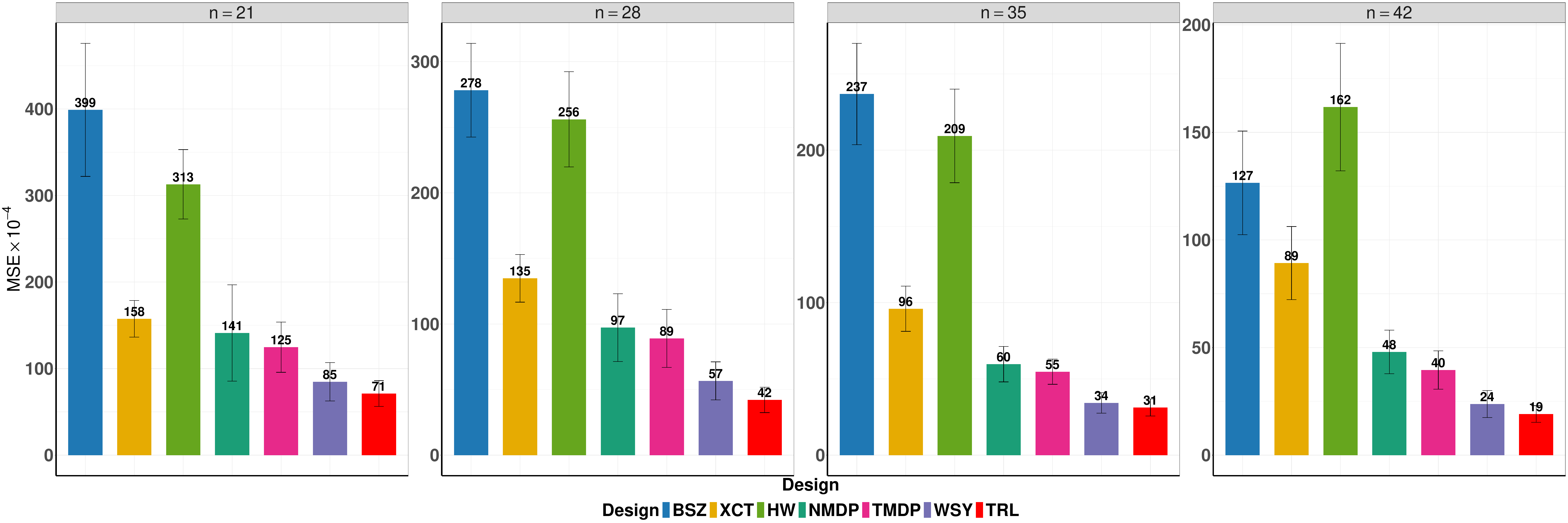}
    \caption{
           Barplots of the empirical MSEs under different designs in the real-data-based simulation with $M=12$, with a $2.5\%$ performance improvement from the new policy.
    }
    \label{fig:real_data_sim_T_12_2}
\end{figure}

\section{Additional Experiments}\label{sec:add_exps}

In this section, we present additional experiments to further support our method and provide a more comprehensive empirical evaluation, including ablation studies, analyses of hyperparameter and simulator sensitivity and robustness, evaluations of generality and computational complexity, an assessment of simulator availability in practice, empirical correlation analysis based on real A/A data, and sensitivity experiments under varying levels of autocorrelation.

\subsection{Ablation study on LSTM Models with different attention window sizes} 
We conduct the experiment in Setting (i) to compare our full-history Transformer (TRL) against two key ablations: (1) an LSTM-based DDQN with an identical training budget, and (2) Transformer variants with restricted attention windows. 

The results in Table \ref{tab:window_lstm} show that sequence modeling based on LSTM already offers a substantial advantage over traditional methods such as TMDP (MSE: 0.0036). Our full-history Transformer (TRL) further improves upon LSTM, achieving a lower MSE. Moreover, we observe that, for the Transformer variants, increasing the attention window size consistently leads to better performance, indicating the benefit of exploiting longer-range temporal dependencies.

\begin{table}[t]
\centering
\caption{Empirical MSE between TRL, LSTM, and different attention window sizes.}
\label{tab:window_lstm}
\begin{tabular}{c c c c c}
\hline
TRL & LSTM & Window size 6 & Window size 12 & Window size 24 \\
\hline
\textbf{0.002750} & 0.002975 & 0.003002 & 0.002915 & 0.02871 \\
\hline
\end{tabular}
\end{table}

\subsection{Ablation Study on Warm-Up Schemes}
Our simulation protocol incorporates a ``warm-up'' period during which the reward is set to zero for the first several days so that learning begins only when the MSE estimates become more stable. We conduct ablations with warm-up lengths of 2, 5, 7, and 14 days using the real-data-based simulator with $(n,M)=(35,12)$ and a $5\%$ policy lift. As reported in Table~\ref{tab:warmup_comparison_wide}, shorter warm-up periods lead to noticeably higher MSE, indicating that noisy early-stage rewards degrade learning. The best performance is achieved with a 7-day warm-up (rather than 14), which strikes an effective balance between removing unstable early estimates and preserving sufficient training data.

\begin{table}[htbp]
\small
  \centering
  \caption{ MSE ($\times 10^{-4}$) under Different Designs}
  \label{tab:warmup_comparison_wide}
  \begin{tabular}{lccccc}
    \toprule
    \textbf{Design} & \textbf{BSZ} & \textbf{HW} & \textbf{XCT} & \textbf{NMDP} & \textbf{TMDP} \\
    \midrule
    MSE & 260.0 & 224.0 & 110.0 & 66.2 & 59.9 \\
    \addlinespace
    \textbf{Design} & \textbf{WSY} & \textbf{TRL (warm-up=2)} & \textbf{TRL (warm-up=5)} & \textbf{TRL (warm-up=7)} & \textbf{TRL (warm-up=14)} \\
    \midrule
    MSE & 46.5 & 42.6 & 40.1 & \textbf{37.9} & 39.2 \\
    \bottomrule
  \end{tabular}
  \vspace{0.3em}
  \footnotesize
  \noindent
  Note: MSE ($\times 10^{-4}$) across 400 simulations. TRL with 7-day warm-up achieves the lowest MSE, confirming that sparse rewards (longer burn-in) outperform dense rewards (shorter burn-in). The optimal warm-up length is 7 days, not 14 days.
\end{table}

\begin{figure}[H] \centering \includegraphics[width=0.9\linewidth,height=0.6\textheight,keepaspectratio]{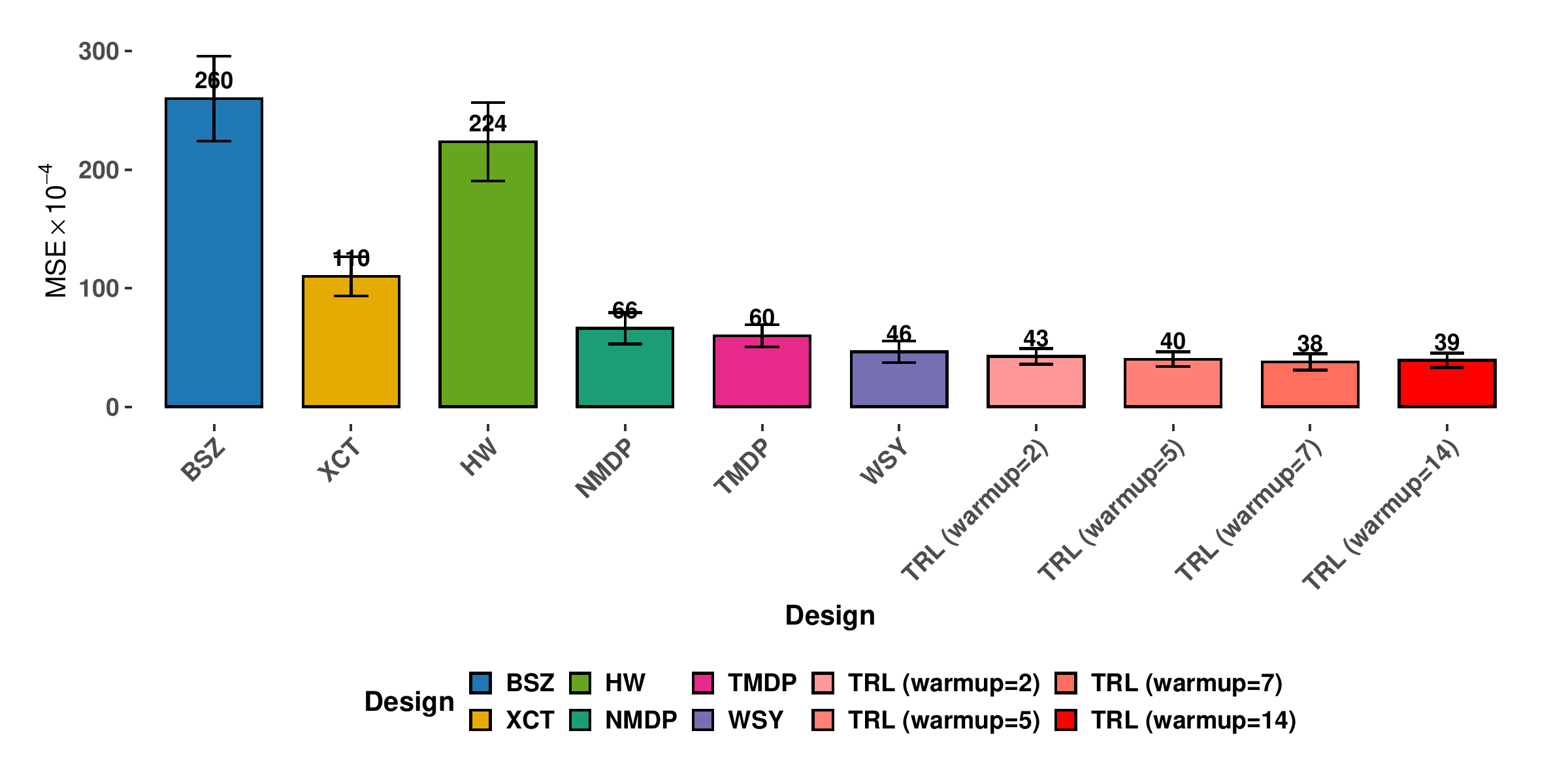} \caption{ MSE ($\times 10^{-4}$) under different warm-up lengths. Shorter warm-up (2, 5 days) → higher MSE, validating sparse rewards. } \label{fig:real_data_sim_T_12_5_warmup_days} \end{figure}

\subsection{Sensitivity and robustness of simulator}
To evaluate our algorithm’s robustness to the misspecification of the simulator, we conducted a series of additional experiments based on Setting (i) from the Synthetic simulator in Section 4. Recall that without any misspecification, TRL achieves an MSE of 0.0027, whereas the best baseline, TMDP, has an MSE of 0.0036. We then introduced five types of shifts to  misspecify the simulator:
\begin{itemize}

\item \textbf{Noise shift.} We increase the noise levels of \eqref{eq:Om_model}, from $\sigma_y=\sigma_o=0.2$ to $0.25$, resulting in an MSE of \textbf{0.0028}.

\item \textbf{Covariate shift.} The initial covariate $O_{i,1}$ in \eqref{eq:Om_model}  originally follow a bivariate standard normal distribution. We shift the mean to $0.1$ and $0.3$, obtaining MSEs of \textbf{0.0028} and \textbf{0.0030}, respectively.

\item \textbf{Transition shift.} We modify a $2\times 2$ transition matrix $\Phi_m$ in \eqref{eq:Om_model} by changing its last entry from $0.60$ to $0.55$, yielding an MSE of \textbf{0.0030}.

\item \textbf{Effect-coefficient shift.} We increase the effect coefficient $\beta$ in \eqref{eq:Ym_model} from $0.20$ to $0.25$, which gives an MSE of \textbf{0.0030}.

\item \textbf{Reward shift.} We add a constant $+0.1$ to $Y_t$ in \eqref{eq:Ym_model} to model a reward shift, resulting in an MSE of \textbf{0.0029}.

\end{itemize}
Across all five scenarios, TRL shows only a minor decline compared to the original setting without misspecification and consistently outperforms the best baseline. These results demonstrate that TRL maintains strong performance even when the simulator deviates moderately from the true environment.

\subsection{Sensitivity and robustness of hyperparameters}
The robustness of our method to its hyperparameters is crucial. We use Setting (i) from the Synthetic simulator with ($M=4$, $n=30$) as the base configuration and vary one hyperparameter at a time to create different settings. MSEs of estimators under our designs are reported in Table \ref{tab:ablation}, which show that the performance of our method exhibits some fluctuation as the hyperparameters vary, but all results remain close to the MSE of \textbf{0.0027} reported in the main paper. At the same time, they are still clearly better than the best baseline, TMDP, whose MSE is \textbf{0.0036}.
We conclude that our method's performance is stable across a range of key hyperparameter values, consistently outperforming the best baseline. 

\begin{table}[h]
\centering
\caption{Empirical MSEs for different hyperparameters.}
\label{tab:ablation}
\begin{tabular}{llll}
\hline
\textbf{Hyperparameter} & \textbf{Value 1} & \textbf{Value 2 (Main Paper)} & \textbf{Value 3} \\
\hline
Discount Factor ($\alpha$) & 0.0030 ($\alpha=0.9$) & \textbf{0.0027} ($\alpha=0.8$) & 0.0027 ($\alpha=0.7$) \\
Transformer Width & 0.0027 (32) & \textbf{0.0027} (64) & 0.0030 (128) \\
Target Network Update Rate & 0.0027 (0.004) & \textbf{0.0027} (0.005) & 0.0029 (0.006) \\
Exploration Rate ($\epsilon$) & 0.0029 ($\epsilon=0.03$) & \textbf{0.0027} ($\epsilon=0.10$) & 0.0029 ($\epsilon=0.15$) \\
\hline
\textbf{Best Baseline (TMDP)} & \multicolumn{3}{c}{\textbf{0.0036}} \\
\hline
\end{tabular}
\end{table}

\subsection{Large time horizon} 
To evaluate the performance of proposed method with larger time horizons $T$,we conduct additional experiments in Setting (i) of the synthetic simulation. 
We fix $n = 30$ and let $M \in \{4, 8, 12, 16\}$, with $T = nM$.
 
Table \ref{tab:mse_by_time_horizon} reports the results for different methods under larger values of $T$.  As the time horizon increases, our method consistently outperforms all competing baselines, demonstrating its robustness to longer experimental horizons.

\begin{table}[htbp]
  \centering
      \caption{Empirical MSE ($\times 10^{-4}$) across 400 replicates under different time horizons.}
    \label{tab:mse_by_time_horizon}
  \begin{tabular}{lcccc}
    \toprule
    \textbf{Method} & \textbf{T=120} & \textbf{T=240} & \textbf{T=360} & \textbf{T=480} \\
    \midrule
    BSZ             &   320         &   253           &     213         &       182       \\
    XCT             & 173       & 296       & 240       & 171       \\
    HW              & 98       & 317       & 256       & 173      \\
    NMDP            & 37        & 47.3        & 70.4        & 77.8       \\
    TMDP            & 36        & 45.3        & 70.1       & 77.9       \\
    WSY             & 39       & 55.7        & 64.7        & 71.2        \\
    TRL             & 27     & 41        & 55.2        & 69   \\
    \bottomrule
  \end{tabular}

  \noindent
\end{table}

\subsection{Computational complexity}
We report the \textbf{empirical running time} in our experiments. In the synthetic simulator with setting (i), we fix $n = 30$ and vary $M \in \{4, 8, 12, 16\}$, and record the running time on an NVIDIA 2080 GPU. Under this configuration, the experiment with $n = 30$ and $M = 4$ finishes in a little over 3 hours, see Table \ref{tab:time_T1}. However, once the policy has been trained, generating an experimental design for a new A/B test is very fast: in our experiments, the deployment time is comparable to that of existing baselines and is typically less than one minute. Although our method is more time-consuming to train, it achieves higher estimation accuracy than the competing approaches, which can be highly valuable in practical applications.
\begin{table}[H]
\centering
\caption{Training time (1000 epochs) under different horizons $T$.}
\label{tab:time_T1}
\begin{tabular}{cc}
\hline
Horizon $T$ &  Running Time(Hours)\\
\hline
$120$  & 3{:}34{:}50.529 \\
$240$  & 12{:}18{:}55.589 \\
$360$ & 26{:}39{:}10.679 \\
$480$ & 45{:}53{:}16.918 \\
\hline
\end{tabular}

\end{table}

\subsection{Assessment of simulator availability in practice}
When no simulator exists, we continue to use \textit{setting (i)} in the \textit{Synthetic} simulator and explore the sample (Days) needed before TRL meaningfully outperforms simple switchbacks, and the length. According  to Table \ref{tab:sequential-burn-in}, we observe that when the burn-in length is around 9 days, the performance is comparable to switchback, and when it reaches 11 days, it clearly surpasses switchback.
\begin{table}[t]
\centering
\caption{Effect of Different Burn-in Days on the  Empirical MSEs}
\label{tab:sequential-burn-in}
\begin{tabular}{c c c c c}
\hline
Switchback & Day=5 & Day=7& Day=9&Day=11 \\
\hline
0.003855& 0.004419 & 0.00439 & \textbf{0.003899} & \textbf{0.003704} \\
\hline
\end{tabular}
\end{table}
Hence, the required burn-in period 9 days is short relative to the duration of experiments 30 days. 

\subsection{Empirical correlation analysis of real data} \label{sec:appendix_empirical_data}
To validate the high nonstationarity and long-horizon dependencies in the real data, we provide figures reporting the empirical autocorrelation and cross-correlation functions of demand and supply under the real-data–based simulator described in the main text, with n = 35, M = 12, and a 5\% performance lift under the new policy. 

We visualize the autocorrelation functions and cross-time correlations of gmv, the number of order requests, and total online time in Figures \ref{fig:acf} and \ref{fig:all_corr}, and the cross-correlations between order requests and drivers’ total online time in Figure \ref{fig:ccf}. Figure \ref{fig:acf} shows that all three variables exhibit strong positive lag-1 and lag-2 autocorrelations, followed by negative autocorrelations at lags 3–5. Figure \ref{fig:all_corr} shows that order requests and drivers’ total online time have non-zero, predominantly positive, cross-correlations. In addition, most cross-time correlations in Figure \ref{fig:ccf} are also positive. These dynamics violate the Markovian and stationarity assumptions underlying TMDP/NMDP, which struggle to capture such non-stationary, long-horizon dependencies. In contrast, TRL leverages full historical trajectories to adaptively model evolving relationships.

\subsection{Sensitivity experiments under varying levels of correlation}
To validate the advantage of TRL under different correlation strengths, we firstly systematically rescale the demand–supply cross-correlation by factors in \{0.25, 0.5, 1.0, 1.5\}, with 1.0 corresponding to the main setting, by considering 
\[
\mathbb{E}[O_{m+1,1}\mid A_m, O_m]
= \Phi_{m,11} O_{m,1} + (\phi_{\mathrm{coef}} \Phi_{m,12} ) O_{m,2} + \Gamma_{m,1} A_m,
\]
\[
\mathbb{E}[O_{m+1,2}\mid A_m, O_m]
= \Phi_{m,22} O_{m,2} + (\phi_{\mathrm{coef}} \Phi_{m,21} ) O_{m,1} + \Gamma_{m,2} A_m.
\]
TRL's relative MSE (vs.\ TMDP/NMDP) \textbf{decreases} as the correlation increases (Figure~\ref{fig:real_data_sim_T_12_5_diff_corrs} and Table~\ref{tab:mse_by_phi_wide}), indicating that its advantage grows precisely when temporal dependence is strongest and most non-stationary, thereby confirming that TRL excels in environments where historical context matters most.

Secondly, we conduct an experiment by adjusting the correlation strength among outcome residuals 
$\{\hat{e}_{i,m}: i=1,\ldots,n,\; m=1,\ldots,M\}$ in Algorithm~\ref{algo:res_bootstrap}. 
Let $
\hat{\Sigma}_e=\frac{1}{n}\sum_{i=1}^n 
(\hat{\be}_i-\bar{\hat{\be}})(\hat{\be}_i-\bar{\hat{\be}})^\top,\quad
\hat{\be}_i=(\hat{e}_{i,1},\ldots,\hat{e}_{i,M})^\top$,and write $\hat{\Sigma}_e=L_eL_e^\top$ with $D_e=\diag(\hat{\Sigma}_e)$ and 
$R_{e,0}=D_e^{-1/2}\hat{\Sigma}_eD_e^{-1/2}$ denoting the empirical correlation matrix. 

To vary dependence while keeping marginal variances fixed, we form a one-parameter family 
$\{R_e(\rho):\rho\in[-1,1]\}$ interpolating between independence, the empirical structure, and a more 
strongly correlated target $R_{e,1}$ (we take $(R_{e,1})_{mm'}=0.6$ for $m\neq m'$).  
Specifically,
\begin{equation*}
    R_e(\rho)=
\begin{cases}
(1+\rho)\,R_{e,0}-\rho\, I_M, & \rho\in[-1,0],\\[4pt]
(1-\rho)\,R_{e,0}+\rho\, R_{e,1}, & \rho\in[0,1],
\end{cases}
\end{equation*}
so that $R_e(-1)=I_M$, $R_e(0)=R_{e,0}$, and $R_e(1)=R_{e,1}$. We then set $
C_e(\rho)=D_e^{1/2} R_e(\rho) D_e^{1/2},
\quad
C_e(\rho)=L_e(\rho)L_e(\rho)^\top,
\quad
A_e(\rho)= L_e(\rho)L_e^{-1}$, and define transformed residuals $\tilde{\be}_i(\rho)=A_e(\rho)\hat{\be}_i, i=1,\ldots,n.$ By construction, $\Var(\tilde{\be}_i(\rho)) = C_e(\rho)$ and
$\diag{C_e(\rho)} = \diag(\hat{\Sigma}_e)$ for all $\rho$. Thus, the marginal variances are preserved, while the temporal correlations are attenuated as $\rho \to -1$ and amplified as $\rho \to 1$. Replacing $\hat{\be}_i$ by $\tilde{\be}_i(\rho)$ in Algorithm~\ref{algo:res_bootstrap} therefore yields a family of bootstrap simulators indexed by~$\rho$, which we use to examine how the estimators’ MSE varies with correlation strength. In our experiments, we consider $\rho \in \{-0.8,-0.4,0,0.4,0.8\}$.  Empirical results (Figure~\ref{fig:real_data_sim_T_12_5_diff_residual_corrs} and Table~\ref{tab:mse_by_rho}) reveal pronounced temporal persistence, strong cross-time correlations, and complex lead–lag structures—conditions under which TRL’s full-history modeling is clearly advantageous compared with the Markovian assumptions underlying TMDP and NMDP.

    \begin{table}[htbp]
  \centering
  \caption{MSE ($\times 10^{-4}$) by Method and Demand--Supply Correlation ($\phi_{coef}$)}
  \label{tab:mse_by_phi_wide}
  \begin{tabular}{lccc}
    \toprule
    \textbf{$\phi_{coef}$} & \textbf{NMDP} & \textbf{TMDP} & \textbf{TRL} \\
    \midrule
    0.25 & 58.4 & 56.5 & \textbf{31.6} \\
    0.50 & 60.3 & 56.9 & \textbf{32.3} \\
    1.00 & 66.2 & 59.9 & \textbf{39.2} \\
    1.50 & 75.8 & 66.4 & \textbf{40.3} \\
    \bottomrule
  \end{tabular}
  \vspace{0.3em}
  \footnotesize
\end{table}

\begin{table}[htbp]
  \centering
  \caption{MSE ($\times 10^{-4}$) by Method and Residual Temporal Correlation ($\rho$)}
  \label{tab:mse_by_rho}
  \begin{tabular}{lccc}
    \toprule
    \textbf{$\rho$} & \textbf{NMDP} & \textbf{TMDP} & \textbf{TRL} \\
    \midrule
    -0.8 & 4.88 & 4.10 & \textbf{2.91} \\
    -0.4 & 5.78 & 5.19 & \textbf{3.47} \\
    0.0  & 6.62 & 5.99 & \textbf{3.92} \\
    0.4  & 7.99 & 7.25 & \textbf{4.79} \\
    0.8  & 9.47 & 8.26 & \textbf{5.58} \\
    \bottomrule
  \end{tabular}
  \vspace{0.3em}
  \footnotesize
\end{table}

\begin{figure}[htbp]
    \centering
    
    \begin{subfigure}[b]{0.95\textwidth}
        \centering
        \includegraphics[width=\textwidth, height=0.25\textheight, keepaspectratio]{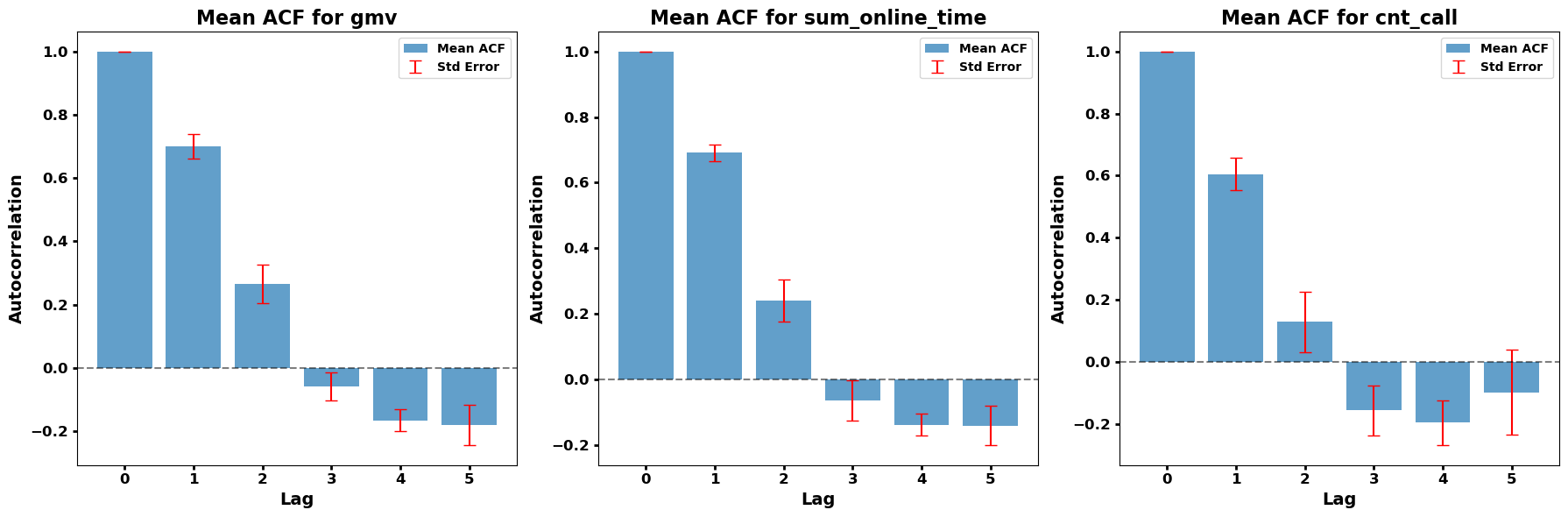}
        \caption{Autocorrelation of \textit{gmv} and state variables (lag 0 to 5).}
        \label{fig:acf}
    \end{subfigure}

    \vspace{1em}
    
    \begin{subfigure}[b]{0.95\textwidth}
        \centering
        \includegraphics[width=\textwidth, height=0.25\textheight, keepaspectratio]{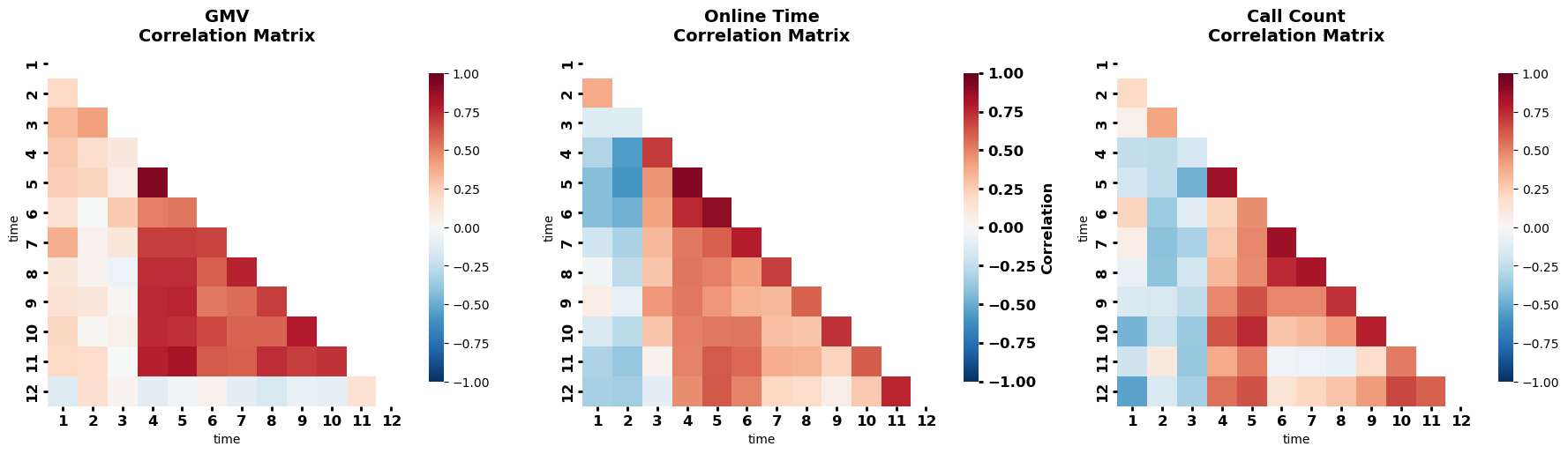}
        \caption{Cross-time correlation matrices for \textit{gmv}, \textit{sum\_online\_time}, and \textit{cnt\_call}.}
        \label{fig:all_corr}
    \end{subfigure}
    
    \vspace{1em}

    \begin{subfigure}[b]{0.95\textwidth}
        \centering
        \includegraphics[width=\textwidth, height=0.25\textheight, keepaspectratio]{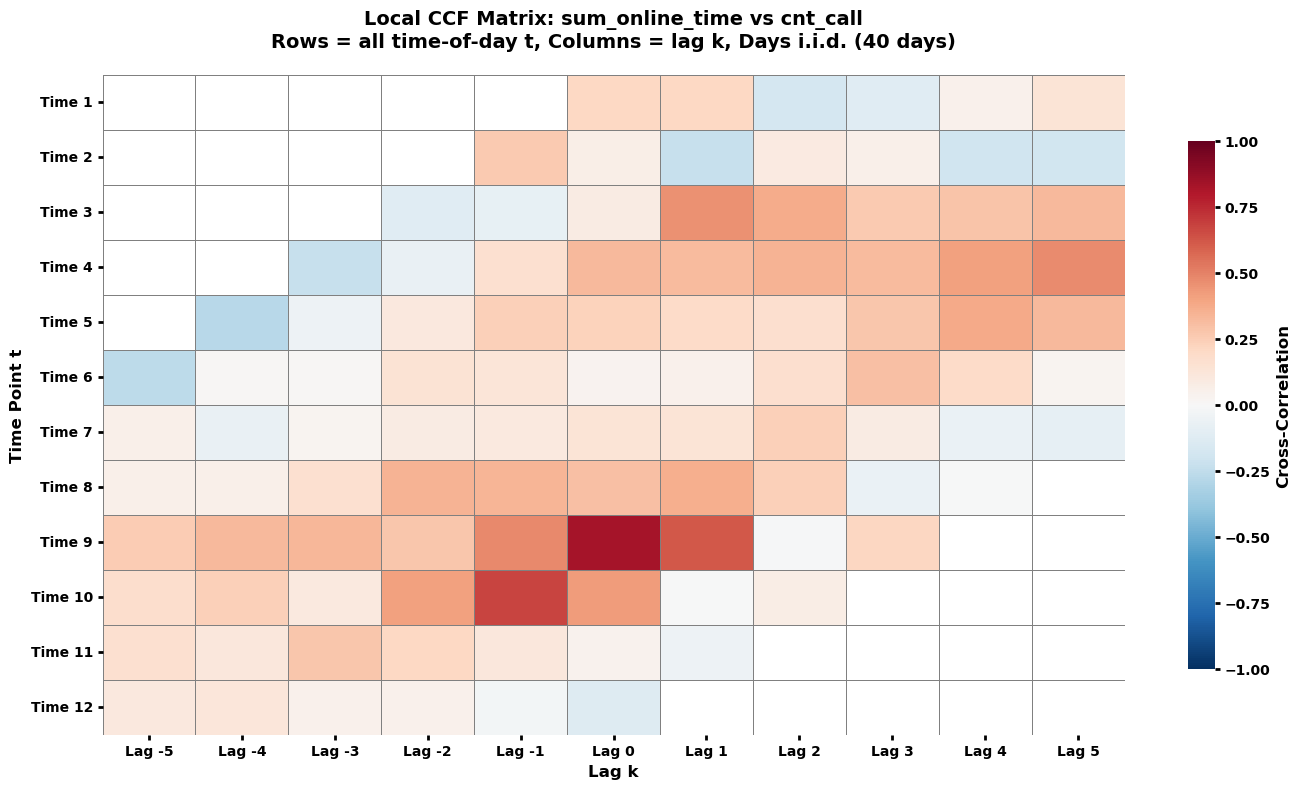}
        \caption{Cross-correlation between demand (\textit{sum\_online\_time}) and supply (\textit{cnt\_call}) across time points (t=1 to 12).}
        \label{fig:ccf}
    \end{subfigure}

        \caption{Empirical evidence of temporal and cross-series dependence in real data.}
    \label{fig:all_rebuttal_figs}
\end{figure}

\begin{figure}[H]
    \centering
    \includegraphics[width=0.9\linewidth, height=0.6\textheight, keepaspectratio]{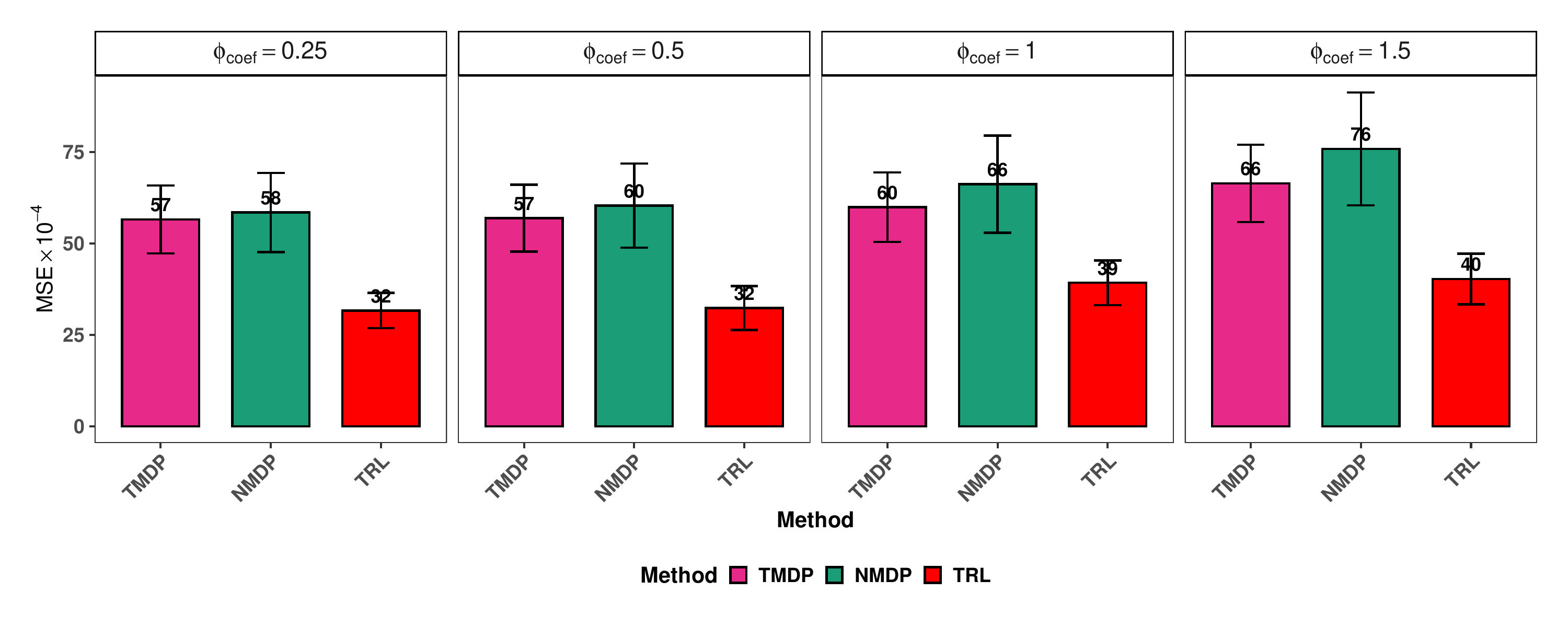}
    \caption{
        Empirical MSE ($\times 10^{-4}$) under different demand--supply correlation structures (parameterized by $\Phi$). 
    }
    \label{fig:real_data_sim_T_12_5_diff_corrs}
\end{figure}

\begin{figure}[H]
    \centering
    \includegraphics[width=0.9\linewidth, height=0.6\textheight, keepaspectratio]{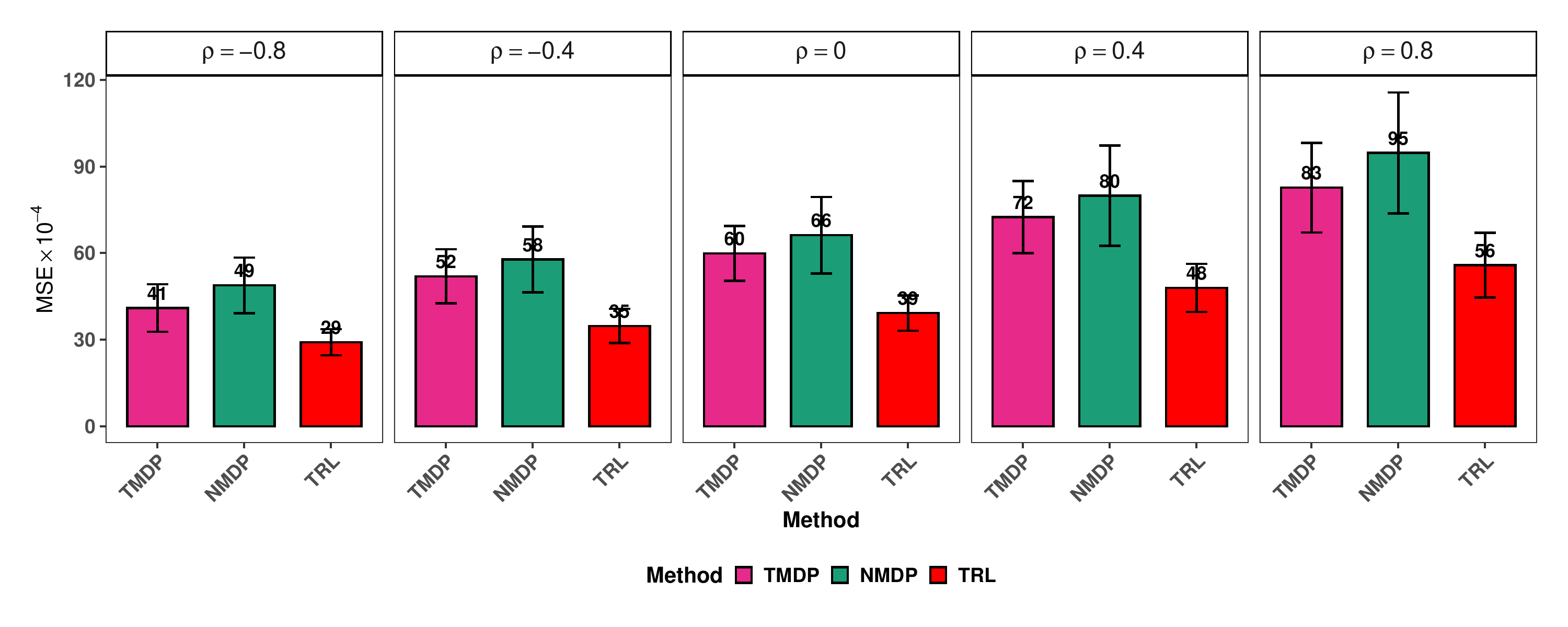}
    \caption{
     Empirical MSE ($\times 10^{-4}$) under Different Residual Correlation Levels (Parameterized by $\rho$).
    }
    \label{fig:real_data_sim_T_12_5_diff_residual_corrs}
\end{figure}

\end{document}